\newtheorem{theorem}{Theorem}
\newtheorem{example}{Example}
\declaretheorem[name=Corollary, sibling=theorem, refname={corollary,corollaries}, Refname={Corollary,Corollaries}]{corollary}
\declaretheorem[name=Lemma, sibling=theorem, refname={lemma,lemmas}, Refname={Lemma,Lemmas}]{lemma}
\declaretheorem[name=Definition, sibling=theorem, refname={definition,definitions}, Refname={Definition,Definitions}]{definition}
\declaretheorem[name=Assumption, refname={assumption,assumptions}, Refname={Assumption,Assumptions}]{assumption}
\newcommand{\cmark}{\textcolor{green}{\ding{51}}}
\newcommand{\xmark}{\textcolor{red}{\ding{55}}}
\title{Combinatorial Bandits without Total Order for Arms}
\author[1]{Shuo Yang$\thanks{Email: \url{yangshuo_ut@utexas.edu}}$}
\author[1]{Tongzheng Ren}
\author[1, 2]{Inderjit S. Dhillon}
\author[1]{Sujay Sanghavi}
\affil[1]{The University of Texas at Austin}
\affil[2]{Amazon}
\begin{document}
\maketitle
\begin{abstract}
    We consider the combinatorial bandits problem, where at each time step, the online learner selects a size-$k$ subset $s$ from the arms set $\mathcal{A}$, where $\abs{\Acal} = n$, and observes a stochastic reward of each arm in the selected set $s$. The goal of the online learner is to minimize the regret, induced by not selecting $s^*$ which maximizes the expected total reward. Specifically, we focus on a challenging setting where 1) the reward distribution of an arm depends on the set $s$ it is part of, and crucially 2) there is \textit{no total order} for the arms in $\mathcal{A}$. 
    
    In this paper, we formally present a reward model that captures set-dependent reward distribution and assumes no total order for arms. Correspondingly, we propose an Upper Confidence Bound (UCB) algorithm that maintains UCB for each individual arm and selects the arms with top-$k$ UCB. We develop a novel regret analysis and show an $O\rbr{\frac{k^2 n \log T}{\epsilon}}$ gap-dependent regret bound as well as an $O\rbr{k^2\sqrt{n T \log T}}$ gap-independent regret bound. We also provide a lower bound for the proposed reward model, which shows our proposed algorithm is near-optimal for any constant $k$. Empirical results on various reward models demonstrate the broad applicability of our algorithm.
\end{abstract}
\section{Introduction}\label{sec:introduction}

Arising from various real-world applications (online advertisement, recommendation systems, etc.), combinatorial bandits \citep{chen2013combinatorial} have become an important problem in the online learning. In this paper, we focus on the setting that for a given set of arms $\Acal$ with size $n$ (e.g. $n$ products to be recommended), at every time step $t$, the online learner selects $k$ arms from $\Acal$, and offers the selected set $s$ to the customer. The customer rewards each arm $a_i \in s$ with a set-dependent $X_{i, s}$, and the online learner observes the rewards of each arm. The goal of the online learner is to minimize the regret of not selecting $s^*$ which maximizes the expected reward.

It is observed that a human's preference is typically constructed only when offered a set of alternatives, and the preference can be inconsistent across different sets \citep{macdonald2009preference}. For example, for 3 items $A, B, C$ offered in sets of two, a person can prefer $A$ over $B$, $B$ over $C$ and $C$ over $A$. The loops and reverses in preference motivate us to study the combinatorial bandits setting where the reward distribution of each arm is set-dependent, and crucially, without a total order (\Cref{def:total_order}) in $\Acal$.

\subsection{An old Algorithm, a weak assumption, and a key observation for regret analysis}

Upper Confidence Bound (UCB) algorithm is the standard off-the-shelf choice for many bandit problems. Even in the presence of set-dependent reward, one can nevertheless ignore the set $s$ and maintain UCB estimations for the arms in $\Acal$. In each time step, set $s$ is constructed with the $k$ arms with the highest UCB. The UCB of an arm $a_i\in\Acal$ is defined in the usual way as $\textit{UCB}_i(t) = C_i(t)/N_i(t)+\sqrt{\frac{\alpha\log T}{N_i(t)}}$, where $C_i(t)$ is the cumulative reward of arm $a_i$, $N_i(t)$ is the number of times that arm $a_i$ is in the selected set $s$ up to time $t$, and $\alpha$ is a constant. This is the algorithm we study in this paper (\Cref{alg:UCB}).

Empirically, even in the setting where the reward distribution is set-dependent, people still use the aforementioned UCB algorithm \citep[e.g. the closely-related Sparring algorithm][]{ailon2014reducing}, however, only as a heuristic with little theoretical understanding. \textbf{Existing analysis of UCB does not provide a regret bound in this setting}, as there is no fixed expected reward associated with the arms. In particular, in general, it is impossible to prove any regret bound better than $O(n^k)$ without any additional assumption - since without any additional assumption, the feedback for one set does not give any indication about any other sets.

In this paper, we propose a new assumption for the reward model which we call \textbf{\textit{weak optimal set consistency}} (\Cref{assumption:weak}), under which the UCB algorithm provably achieves small regret. \Cref{assumption:weak} assumes that given the optimal set $s^*$, for any sub-optimal set $s$ and any arm $a$ that is common in $s$ and $s^*$, the reward expectation of $a$ is higher in $s$ than in $s^*$ (since other arms in $s$ are "less competitive"). As the assumption does not constrain the relationship between any two sub-optimal sets, it does not assume any total order for $\Acal$. Examples (see \Cref{example:assumption} and \Cref{subsec:illustrative}) are constructed to show \Cref{assumption:weak} can capture a wide range of set-dependent reward distribution with no total order. Moreover, many previously studied reward models (Multinomial Logit, Random Utility Model, etc.) are special cases of \Cref{assumption:weak} (see discussion in \Cref{subsec:model_and_assumption}).

To build intuition for how the UCB algorithm works under \Cref{assumption:weak}, we present an illustrative experiment. The imaginary environment considers offering suggestions from 6 candidates (shown in \Cref{fig:illustrative_experiment}) to customers looking for cameras, where 3 of them need to be offered each time. The reward model is set such that there is no total order (see \Cref{subsec:illustrative}). The UCB algorithm converges to the optimal set $\cbr{\texttt{Nikon, Canon, Sony}}$, and \Cref{fig:illustrative_experiment} shows the process.

\begin{figure}[ht]
  \begin{minipage}[c]{0.49\textwidth}
    \includegraphics[width=\textwidth]{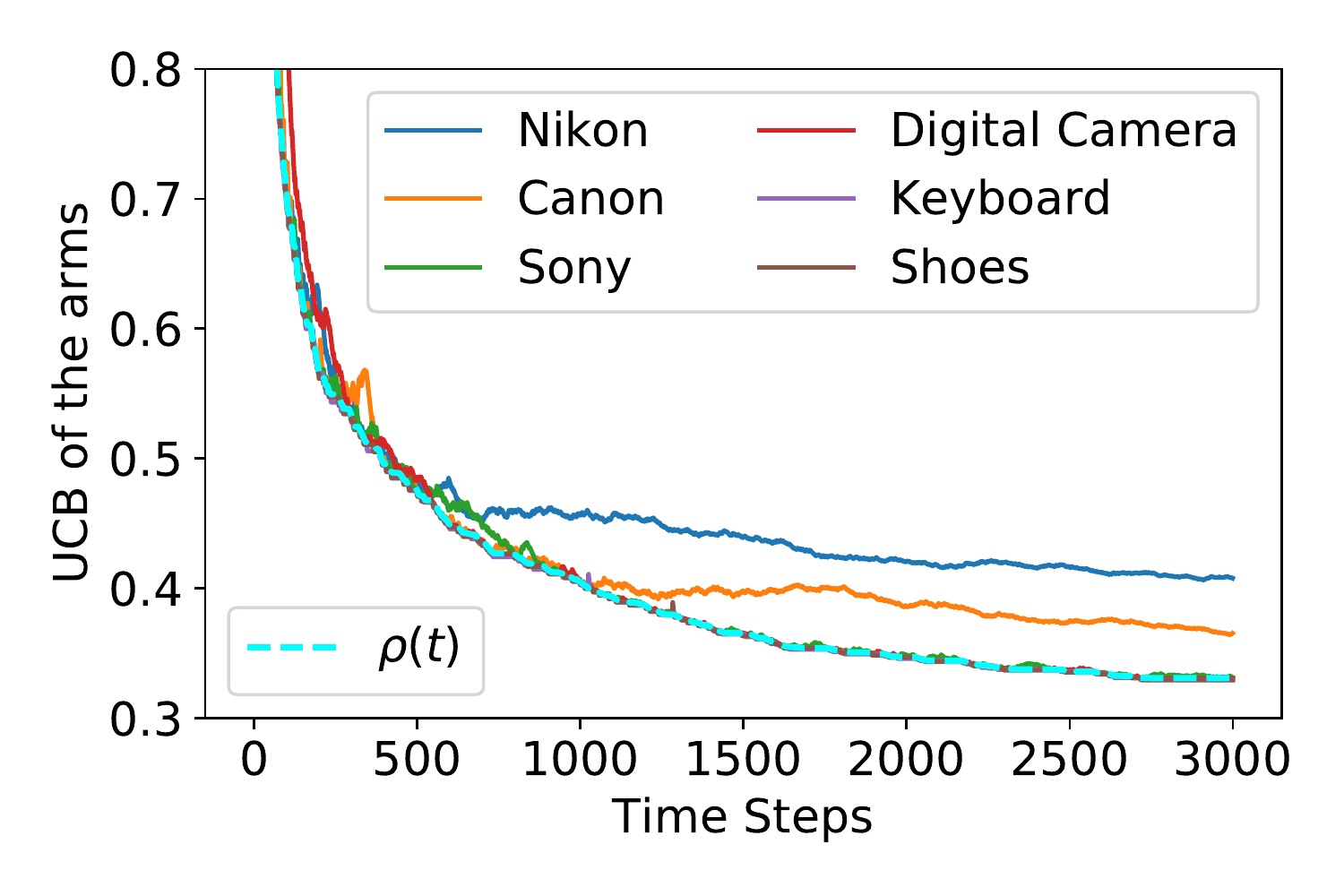}
  \end{minipage}\hfill
  \begin{minipage}[c]{0.5\textwidth}
    \caption{Evolution of UCB in the environment defined in \Cref{subsec:illustrative}. \textbf{Observation:} the UCB of all arms decrease together initially, and the arms in the optimal set separate out later. $\rho(t)$ (defined later) precisely captures this dynamics. Note this happens without the arms having total order or fixed set-independent reward expectation. In fact, the \texttt{Digital Camera} has the highest reward expectation in most sets, while the optimal set is $\cbr{\texttt{Nikon, Canon, Sony}}$.
    } \label{fig:illustrative_experiment}
  \end{minipage}
\end{figure}

Here we formalize the observation in \Cref{fig:illustrative_experiment}. We introduce $\rho(t)$, which helps to characterize the dynamics of UCB.  Let $s(t)$ be the set of arms selected by the aforementioned UCB algorithm at time $t$, $\rho'(t) = \min_{a_i \in s(t)}\textit{UCB}_i(t)$, and $\rho(t) = \min_{\tau \le t}\rho'(\tau)$. By definition, $\rho(t)$ is monotonically non-increasing, and $\textit{UCB}_{i}(t) \ge \rho'(t) \ge \rho(t),~\forall a_{i} \in s(t)$, (i.e. $\rho(t)$ is a lower bound for the UCB of the arms in $s(t)$). The following lemma shows that, for the arms not in $s(t)$, $\rho(t)$ is always an upper bound, and soon a tight estimate of all their UCB (proof in \Cref{sec:algorithm_and_regret}).

\begin{lemma}[Dynamics of UCB]\label[lemma]{ob:dynamics_UCB} $\rho(t) \ge \textit{UCB}_{i}(t) \ge \rho(t)\rbr{1 - \frac{1}{N_{i}(T)}},~\forall a_{i} \notin s(t)$.
\end{lemma}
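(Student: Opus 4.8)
The plan is to prove the two inequalities by separate mechanisms: the upper bound $\textit{UCB}_i(t)\le\rho(t)$ by induction on $t$, and the lower bound by a one-step analysis of the UCB update at the last round in which $a_i$ was selected. Throughout I assume rewards lie in $[0,1]$, write $\beta=\alpha\log T$, and repeatedly use the immediate consequence of the top-$k$ selection rule that every arm outside $s(t)$ has UCB at most $\rho'(t)=\min_{a_j\in s(t)}\textit{UCB}_j(t)$ (a non-selected arm cannot beat the weakest selected arm). I also use that an arm's UCB changes only in rounds where it is pulled, so a non-selected arm's UCB is frozen.

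For the upper bound I would induct on $t$. The base case (the first round of UCB selection) is immediate, since there $\rho(t)=\rho'(t)$ and any non-selected arm has UCB $\le\rho'(t)=\rho(t)$. For the step, assume the claim at $t$ and split on whether $\rho$ strictly decreases. If $\rho(t+1)<\rho(t)$, then $\rho(t+1)=\rho'(t+1)$, so every arm outside $s(t+1)$ has UCB $\le\rho'(t+1)=\rho(t+1)$ and we are done. If instead $\rho(t+1)=\rho(t)$, take any $a_j\notin s(t+1)$; if also $a_j\notin s(t)$, then $a_j$ was not pulled at round $t$, its UCB is frozen, and the inductive hypothesis gives $\textit{UCB}_j(t+1)=\textit{UCB}_j(t)\le\rho(t)=\rho(t+1)$.

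The only remaining case is $a_j\in s(t)\setminus s(t+1)$, and this is the crux. Here I would use conservation of the set size: since $\abs{s(t)}=\abs{s(t+1)}=k$ and $a_j$ left the selected set, some arm $a_p\in s(t+1)\setminus s(t)$ must have entered it. Because $a_p\notin s(t)$ it was not pulled at round $t$, so its UCB is frozen and, by the inductive hypothesis, $\textit{UCB}_p(t+1)=\textit{UCB}_p(t)\le\rho(t)$. Since $a_p\in s(t+1)$ while $a_j\notin s(t+1)$, the selection rule gives $\textit{UCB}_j(t+1)\le\textit{UCB}_p(t+1)$, and chaining yields $\textit{UCB}_j(t+1)\le\rho(t)=\rho(t+1)$, closing the induction. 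I expect this step to be the main obstacle: recognizing that a departing arm must be dominated by a freshly entering outside arm whose UCB the hypothesis already controls. The naive bound only gives $\textit{UCB}_j(t+1)\le\rho'(t+1)$, which points the wrong way relative to the running minimum, so the set-conservation argument is exactly what rescues the constant case.

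For the lower bound, let $t_0\le t$ be the last round in which $a_i$ was selected (so $t_0<t$, and $a_i$ is frozen on $(t_0,t]$). Being selected, $\textit{UCB}_i(t_0)\ge\rho'(t_0)\ge\rho(t_0)\ge\rho(t)$ by monotonicity of $\rho$. Writing $m=N_i(t)$ for the pull count reflected at time $t$ and $S\ge0$ for the reward sum before the pull at $t_0$, the frozen value at $t$ satisfies
$$\textit{UCB}_i(t)=\frac{S+x}{m}+\sqrt{\frac{\beta}{m}}\ \ge\ \frac{S}{m}+\frac{m-1}{m}\sqrt{\frac{\beta}{m-1}}=\frac{m-1}{m}\rbr{\frac{S}{m-1}+\sqrt{\frac{\beta}{m-1}}}=\frac{m-1}{m}\,\textit{UCB}_i(t_0),$$
where I dropped the nonnegative new reward $x$ and used $\sqrt{\beta/m}\ge\tfrac{m-1}{m}\sqrt{\beta/(m-1)}$ (since $\sqrt{(m-1)/m}\ge(m-1)/m$). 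Combining with $\textit{UCB}_i(t_0)\ge\rho(t)$ gives $\textit{UCB}_i(t)\ge(1-\tfrac1m)\rho(t)$, the claimed factor; the count is unchanged since $a_i$'s last selection (denoted $N_i(T)$ in the statement), and the degenerate case $m=1$ is trivial as the bound then reads $\textit{UCB}_i(t)\ge0$.
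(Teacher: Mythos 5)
Your proof is correct. The lower bound is essentially identical to the paper's argument: both of you go back to the last round $t_0$ (the paper's $t'$) at which $a_i$ was selected, use $\textit{UCB}_i(t_0)\ge\rho'(t_0)\ge\rho(t)$ by monotonicity of $\rho$, and then absorb the single increment of the pull count into the factor $1-\nicefrac{1}{N_i(t)}$ (note that both your derivation and the paper's actually yield $N_i(t)$ rather than the $N_i(T)$ printed in the statement, which appears to be a typo). Where you genuinely diverge is the upper bound $\textit{UCB}_i(t)\le\rho(t)$. The paper argues backwards: it identifies the last time $t''$ at which the running minimum $\rho(t)$ was attained, shows that any arm outside $s(t'')$ can never re-enter the selected set on $(t'',t]$ (its frozen UCB is at most $\rho(t)$ while every selected arm's UCB exceeds it), and then uses the cardinality argument $\abs{\Acal\setminus s(t'')}=\abs{\Acal\setminus s(t)}=n-k$ to conclude $s(t)=s(t'')$. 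You instead run a forward induction on $t$, splitting on whether $\rho$ strictly decreases, and handle the only nontrivial case --- an arm $a_j$ leaving the selected set while $\rho$ stays constant --- by exhibiting an entering arm $a_p\in s(t+1)\setminus s(t)$ whose frozen UCB is already controlled by the inductive hypothesis and which dominates $a_j$ under the top-$k$ rule. The two mechanisms buy the same conclusion; yours is more elementary and makes the invariant explicit round by round, while the paper's is shorter once one accepts the "non-selected arms stay out until $\rho$ drops" observation, which is also what powers its \Cref{coro:arm-selection}.
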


Notice that, under \Cref{assumption:weak}, once $\rho(t) \leq P(a_i|s^*)$ for some $a_i \in s^*$, all subsequent $s(t)$ will always contain $a_i$, due to the fact that with high probability, $\textit{UCB}_i(t) \ge P(a_i|s^*)$ for all $t\in [T]$ (see \Cref{sec:algorithm_and_regret}). This matches the observation in \Cref{fig:illustrative_experiment}. Further, we can upper bound the time it takes for $\rho(t)$ to be smaller than $P(a_i|s^*)$, which can be converted into a finite time regret bound. We want to emphasize that all the analysis is done without requiring the arms to have set-independent reward expectation (or any notion of intrinsic value), which is drastically different from the standard UCB analysis.

As a summary, our \textbf{main contributions} are:

\begin{itemize}[leftmargin=*, itemsep=0.5ex]
    \item We formalize the combinatorial bandits problem with \textit{weak optimal set consistency} assumption (\Cref{assumption:weak}) which does not require a total order for arms. The new assumption covers many commonly adopted reward models (e.g. Multinomial Logit, and Random Utility Model, etc).
    \item We present a novel analysis of the UCB algorithm (\Cref{alg:UCB}) when the arms do not have set-independent expected reward (or any notion of intrinsic value). Specifically, we prove \Cref{alg:UCB} has a gap-dependent $O(n k^{2} \log T/\epsilon)$ regret upper bound (\Cref{thm:instance_dependent_upper_bound}), as well as a gap-independent $O(k^{2}\sqrt{n T \log T})$ regret upper bound (\Cref{thm:instance_independent_upper_bound}). Here $n$ is the total number of arms, $k$ is the size of selected set $s$, $T$ is the time horizon and $\epsilon$ is the minimum gap between the optimal and sub-optimal set.
    \item Under \Cref{assumption:weak}, we prove a regret lower bound $\Omega(n\log T/k\epsilon)$ when only one of the arms in the selected set has non-zero reward; and a lower bound $\Omega(n \log T / \epsilon)$ when multiple arms in the selected set can have non-zero reward (\Cref{thm:regret_lower_bound}). It demonstrates the optimality of \Cref{alg:UCB} for any constant set size $k$.
\end{itemize}

\section{Motivation and Related Work}\label{sec:related_work}

\begin{table}
\centering
\resizebox{\textwidth}{!}{%
\begin{tabular}{|c|c|c|c|c|}
\toprule
Algorithm & Regret & Fixed $k$ & Set-Dep. Reward & No Total Order  \\ 
\hline
CUCB \citep{chen2013combinatorial} & $ O\rbr{\frac{k^{2}n \log T}{\epsilon}}$ & \cmark & \xmark & \xmark  \\ 
\hline
CombUCB1 \citep{kveton2015tight} & $O\rbr{\frac{k n \log T}{\epsilon}}$ & \cmark & \xmark & \xmark \\ 
\hline
ESCB \citep{combes2015combinatorial} & $O\rbr{\frac{\sqrt{k}n\log T}{\epsilon}}$ & \cmark & \xmark & \xmark \\
\hline
MNL-TS \citep{agrawal2017thompson} & $O\rbr{\sqrt{NT}\log TK}$ & \cmark  & \cmark~(MNL) & \xmark \\
\hline
Explor.-Exploit. \citep{agrawal2019mnl} &  $O\rbr{\frac{k n \log T}{\epsilon}}$ &  \cmark & \cmark~(MNL) & \xmark \\
\hline
MaxMin-UCB \citep{saha2019combinatorial} & $O\rbr{\frac{n\log T}{\epsilon}}$ & \xmark & \cmark~(MNL) & \xmark  \\
\hline
Rec-MaxMin-UCB \citep{saha2019combinatorial} & $O\rbr{\frac{n \log T}{k \epsilon}}$ & \cmark & \cmark~(MNL) & \xmark \\
\hline
Choice Bandits \citep{agarwal2020choice}  & $O \rbr{\frac{n^{2}\log n}{\epsilon^{2}} + \frac{n \log T}{\epsilon^{2}}}$ & \xmark & \cmark & \cmark \\
\hline
\Cref{alg:UCB} (Ours) & $O \rbr{\frac{k^{2}n\log T}{\epsilon}}$ & \cmark & \cmark~  & \cmark \\
\bottomrule
\end{tabular}}\label{tab:related_work}
\caption{Regret upper bounds and settings for stochastic combinatorial bandits. The check marks in "Fixed $k$" mean the algorithms do not need to change the size of $s$ in different time $t$, while cross marks mean they need to change $k$ to achieve small regret. The check marks in "Set-Dep. Reward" mean the reward distribution of arms depends on the set they reside in, while cross marks mean the reward of the arms are generated independent of the set. The cross marks in "No Total Order" mean assuming
individual arms to have intrinsic value, and a total order among the arms, while check marks mean the algorithm does not require such assumption.}
\end{table}

\paragraph{Set-dependent Reward without Arms' Total Order.} 

The inconsistency of human preference \citep{macdonald2009preference} motivates us to study the combinatorial bandit where the reward distribution of each arm depends on the set it resides in, without a total order among the arms. Correspondingly, we propose the \textit{weak optimal set consistency} reward model (\Cref{assumption:weak}), which covers various reward models adopted by many combinatorial bandits work.

The simplest reward model assumes the reward of each arm is generated independent of the selected set (see \Cref{subsec:strict_ordering}) and has been studied in \citep{chen2013combinatorial,kveton2015tight,combes2015combinatorial}. Other work adopt more complicated models to capture the set-dependent reward distribution. However, many of them, on the contrary of \Cref{assumption:weak}, assume a total order among the arms. For example, the Multinomial Logit Model (MNL) assumes a deterministic utility associated with each arm, which induces a total order \citep{abeliuk2016assortment, agrawal2019mnl, saha2019combinatorial, flores2019assortment}. \citet{desir2015capacity,
blanchet2016markov} approximate the user's choice as a random walk on a Markov chain. \citet{berbeglia2016discrete} shows that the discrete choice model and the Markov chain model can be viewed as instances of a "random utility model" (RUM), which also assumes a total order of all the arms. We will show in \Cref{subsec:strict_ordering} that MNL and RUM are both special cases of \Cref{assumption:weak}.

For related work that does not assume total order, \citet{yue2011linear} study linear bandits and assumed a submodular value function which is known to the algorithm. The Choice Bandits \citep{agarwal2020choice} assumes there exists a single best arm that has the largest expected reward in any set, which comes from a different perspective compared with our work.

\paragraph{Fixed Set Size $k$.} 

Our setting requires the size of the selected set $s$ to be exactly $k$. In practice, $k$ represents the available "displaying slots", which should be fully utilized. One common alternative is to require the size of $s$ less than or equal to $k$. However, that alternative usually leads to algorithms that yield set with size strictly less than $k$ most of the time \citep{saha2019combinatorial}. Other related settings \citep{chen2013combinatorial,kveton2015tight,combes2015combinatorial,agrawal2019mnl,agrawal2017thompson} do not allow the algorithm to freely change the size of $s$.

\paragraph{Feedback Model.} There are two commonly studied feedback models. One assumes the online learner only observes the (stochastically) best arm within the set and its reward; the other one assumes each arm generates reward independently, conditioned on the set, and the online learner observes the reward of all arms in the set.

The first feedback model reflects the relative goodness of one arm when comparing with the rest of arms in the set. Such relative feedback has been studied in the dueling bandit problem \citep{yue2012k}, with the focus on relative feedback of 2 arms. Several algorithms have been proposed for the dueling bandits \citep{yue2012k, zoghi2013relative}, while others reduce the dueling bandits to standard multi-arm bandits \citep{ailon2014reducing}. Going beyond 2 arms, the multi-dueling bandits
problem \citep{brost2016multi, sui2017multi} focuses on the pairwise relative feedback which has strictly more information than the single best arm feedback. \citet{saha2018battle, saha2019combinatorial} consider the case where only the best arm in the set is revealed, but focus on recovering the single best arm, instead of the best set.

The second feedback model reveals absolute goodness of the arms within the set, which is more commonly adopted in the stochastic combinatorial bandit problem with semi-bandit feedback \citep{chen2013combinatorial,kveton2015tight,combes2015combinatorial}. Our assumption, algorithm and analysis cover both of the feedback models.

\section{Problem Setup and the Weak Optimal Set Consistency Assumption}\label{sec:setup}

In this section, we first present the combinatorial bandit problem setup and introduce the \textit{weak optimal set consistency} assumption (\Cref{assumption:weak}). We then formally define the "total order" for the arms, and show that many widely studied models (MNL, RUM, etc.) assume such total order and are covered by \Cref{assumption:weak}. We conclude the section with an illustrative example, showing \Cref{assumption:weak} covers non-trivial cases, where there is no total order for $\Acal$.

\subsection{Notations and Definitions}

We consider the stochastic combinatorial multi-armed bandits problem. Given a fixed set of arms $\Acal = \cbr{a_{1}, a_{2}, \cdots, a_{n}}$, let $\Scal$ denote the all $n$-choose-$k$ subsets of $\Acal$. At each time step $t$, the online learner selects a $s(t) \in \Scal$ ($|s(t)| = k$ by definition). The online player then observes the stochastic reward $X_{i, s(t)}$ of all the arms in $s(t)$. To remove ambiguity, we always refer the $a \in \Acal$ as \textbf{\textit{arm}}, and the $s\in \Scal$ as \textbf{\textit{set}}.

The total reward of set $s(t)$ is defined as $Q(s(t)) = \sum_{a_{i} \in s(t)} X_{i, s(t)}$. Let $s^{*}$ be the optimal set, which maximizes the expected reward $\argmax_{s \in \Scal} \EE\sbr{Q(s)}$. The regret is then defined to be
\begin{align*}
  \textit{reg}(t) = \EE\sbr{Q(s^{*}) - Q(s(t))},\quad \text{and} \quad R(T) = \sum_{t=1}^{T}\textit{reg}(t)
,\end{align*}
where the $\textit{reg}(t)$ is the regret at step $t$, and $R(T)$ is the total regret up to $T$. Our goal is to design algorithm for the online player to minimize $R(T)$.

\subsection{Weak Optimal Set Consistency Assumption}\label{subsec:model_and_assumption}

One important feature that distinguishes our setting with standard stochastic combinatorial bandits is the {\textit{set-dependent reward distribution}} and not assuming a total order for the arms.

Here we focus on the binary reward with $X_{i, s} \in \cbr{0, 1}$ and let $P(a_i | s) = \EE\sbr{X_{i, s}}$, with extensions to any bounded reward distribution discussed in \Cref{sec:extension}. Formally, we have the following assumption about $P(a_i | s)$:

\begin{assumption}[\textbf{Weak Optimal Set Consistency}]\label{assumption:weak}
  For any sub-optimal set $s$ and any $a$ that is common in $s, s^*$, we assume $P(a|s) \ge P(a|s^*)$.
\end{assumption}

One salient feature of \Cref{assumption:weak} is \textbf{\textit{not}} assuming the arms $a\in\Acal$ to have total order at any time $t$. We first present several examples that are allowed by our assumption but not other reward models, and formally discuss the "total order" in next subsection.

\begin{example}\label{example:assumption}
  For any $k > 2$, with out loss of generality, we take $a_{1} \in s^{*}, a_{2}\in s^{*}$ with $P(a_{1}|s^{*}) \ge P(a_{2}|s^{*})$, and take $a_{3}, a_{4} \notin s^{*}$. For some sub-optimal set $s_{i}$, \Cref{assumption:weak} allows for:
  \begin{enumerate}
    \item Reversed relative reward expectation:
      \begin{align*}
        &P(a_{1}|s^{*}) \ge P(a_{2}|s^{*}),\quad P(a_{2} | s_{1}) > P(a_{1} | s_{1}),\quad\text{for some $s_{1} \supset \cbr{a_{1}, a_{2}}$} \\
        &P(a_{3} | s_{2}) > P(a_{4} | s_{2}),\quad P(a_{4}| s_{3}) > P(a_{3} | s_{3}), \quad\text{for some $s_{2}, s_{3}$ both containing $a_{3}, a_{4}$}
      .\end{align*}
    \item Non-transitive relative reward expectation: for some $s_4 \supset \cbr{a_2, a_3}, s_5\supset \cbr{a_1, a_3}$,
      \begin{align*}
          P(a_1|s^*) > P(a_2 | s^*),\quad P(a_2 | s_4) > P(a_3 | s_4),\quad P(a_3 |s_5) > P(a_1 | s_5)
      .\end{align*}
  \end{enumerate}
\end{example}
Note that the $s_5$ in the "non-transitive" part of \Cref{example:assumption} also shows that \Cref{assumption:weak} allows the arms not in $s^{*}$ to be better than the arms belonging to $s^{*}$ in some sub-optimal set.

\subsection{{Total Order for Arms and More Restrictive Existing Models}}\label{subsec:strict_ordering}

We start by formally defining the "total order" for arms.

\begin{definition}[Total order for the arms]\label[definition]{def:total_order}
  Given a reward model $P(a|s)$ and any two arms $a_1, a_2 \in \Acal$, we say $a_1 \le a_2$, if $P(a_1 | s) \le P(a_2|s)$ for every $s$ containing $a_1, a_2$. 
  
  Further, a reward model $P(a|s)$ assumes total order for $\Acal$ if: (1) \textit{comparability}, for all $a_1, a_2 \in \Acal$, either $a_1 \le a_2$ or $a_2 \le a_1$; and (2) \textit{transitivity}, $a_1 \le a_2$, $a_2 \le a_3$ implies $a_1 \le a_3$.
\end{definition}

From \Cref{example:assumption}, we see that \Cref{assumption:weak} needs not satisfy either \textit{comparability} or \textit{transitivity} and thus does not assume a total order for $\Acal$. Further, we show that many existing models assume total order for $\Acal$ according to \Cref{def:total_order}, and are special cases of \Cref{assumption:weak}.

\noindent\textbf{Multinomial Logit (MNL):} MNL assumes a deterministic utility $v_{i}$ associated with each $a_i$ and the probability of $a_{i}$ receiving non-zero reward in $s$ is $P(a_{i}| s) = \frac{e^{v_{i}}}{e^{v_{0}} + \sum_{a_{j} \in s}e^{v_{j}}}$, where $v_0$ is a constant. One can verify that the $v_i$s of MNL induce a total order for $\Acal$, and the optimal set $s^{*}$ is composed by arms with highest $v_{i}$. \Cref{assumption:weak} covers MNL as $\sum_{a_{j} \in s}e^{v_{j}} \le \sum_{a_{j} \in s^{*}} e^{v_{j}}$ for any $s \neq s^{*}$.

\noindent\textbf{Random utility model (RUM):} RUM assumes a (random) utility associated for all $a_{i} \in\Acal$, with $U_{i} = v_{i} + \epsilon_{i}$, where $v_{i}$ is a deterministic utility and $\epsilon_{i}$s are i.i.d. random variables drawn from distribution $\Dcal$ at every time step $t$. The probability of $a_{i}$ in $s$ receiving non-zero reward is given by $\PP(a_{i} | s) = \PP\rbr{U_{i} > U_{j}, \forall a_{j}\in s \text{ and } i\neq  j }$. To model the event of no arm $a\in s$ receives non-zero reward, $s$ can be augmented to $s \cup \cbr{a_{0}}$, with random utility $U_{0}$ of $a_0$ defined similarly. When $U_{0}$ is the largest, no arm $a \in s$ receives non-zero reward. It can be verified that $v_i$s in RUM induce a total order for $\Acal$, and the optimal set $s^{*}$ is composed by arms with highest $v_{i}$. For any arm $a\in s^{*}$, putting it to sub-optimal set $s$ leads to $a$ having a larger chance of receiving non-zero reward, as other arms have smaller $v_{i}$, thus satisfies \Cref{assumption:weak}.

\noindent\textbf{Independent reward:} Independent reward model assumes a deterministic reward expectation $v_{i}$ associated with arm $a_{i}$. For the arm $a_{i}$ in any set $s$, it assumes $P(a_{i}|s) = v_{i}$. The $v_i$s immediately induce a total order for $\Acal$. The independent reward model is also covered by \Cref{assumption:weak}, as $P(a_{i}|s)$ does not change in different $s$.

\subsection{An Illustrative Example}\label{subsec:illustrative}

To further build intuition on \Cref{assumption:weak}, we present a synthetic example of providing suggestions to customers looking for cameras. There are 6 candidates \{\texttt{\textcolor{blue}{Nikon, Sony, Canon,}} \texttt{\textcolor{brown}{Digital Camera,}}\texttt{\textcolor{red}{Keyboard, Shoes}}\}. Every time we need to offer 3 suggestions and the customer picks at most one of them. 

\begin{figure}[ht]
    \includegraphics[clip, trim={80 300 150 60}, width=\textwidth]{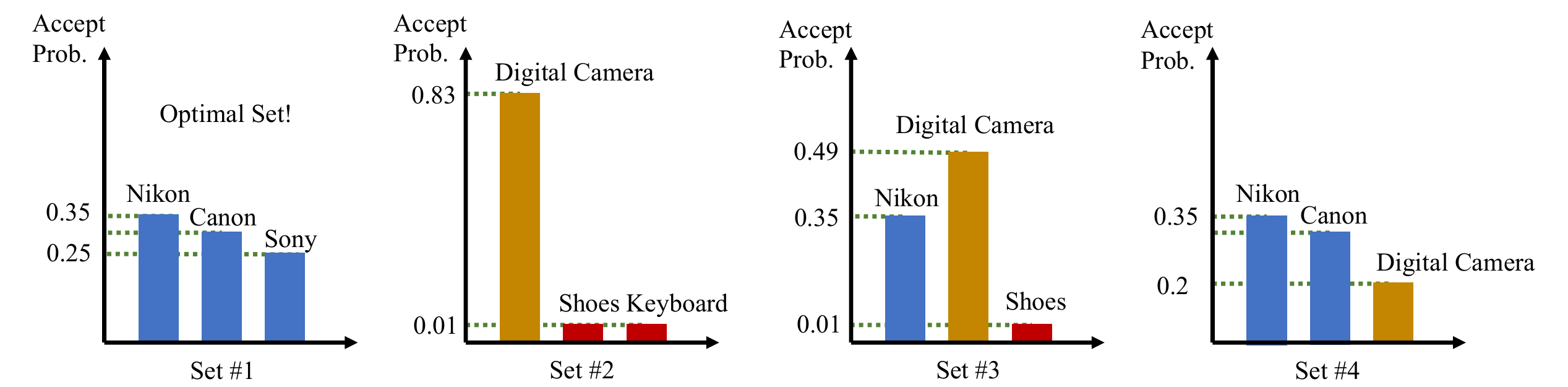}
    \caption{Four representative sets of the example in \Cref{subsec:illustrative}. The set \#1 is optimal, as it maximizes the sum of accepting probability of the suggestions. The \texttt{\textcolor{brown}{Digital Camera}} has highest accepting in many sub-optimal sets (even when paired with the suggestions belonging to the optimal set. see set \#3). Such instances break the total order, but are covered by \Cref{assumption:weak}.}
    \label{fig:illustrative_example}
\end{figure}

For the accepting probability, we set $P(\texttt{\textcolor{blue}{Nikon}}|\cdot) = 0.35, P(\texttt{\textcolor{blue}{Canon}}|\cdot) = 0.3, P(\texttt{\textcolor{blue}{Sony}}|\cdot) = 0.25$, $P(\texttt{\textcolor{brown}{Digital Camera}}|s) = 0.85 - \sum_{a\in s, a\neq{\texttt{\textcolor{brown}{Digital Camera}}}}P(a|s)$ and $P(\texttt{\textcolor{red}{Shoes}}|\cdot) = 0.01$, $P(\texttt{\textcolor{red}{Keyboard}}|\cdot) = 0.01$. We show 4 representative sets in \Cref{fig:illustrative_example}. It can be verified that the optimal set is \{\texttt{\textcolor{blue}{Nikon, Sony, Canon}}\} and this example satisfies \Cref{assumption:weak}, but cannot be covered by any model that assumes a total order (it violates both \textit{comparability} and \textit{transitivity}).

Notice that the existence of the \texttt{\textcolor{brown}{Digital Camera}} suggestion makes the problem harder. We observe that the \texttt{\textcolor{brown}{Digital Camera}} has the highest accepting probability in many sets. This makes \texttt{\textcolor{brown}{Digital Camera}} seemingly the best single suggestion, but it is not part of the optimal set. 
\section{Algorithm and Regret Analysis}\label{sec:algorithm_and_regret}

In this section, we formally describe the algorithm and present its regret bound (both gap-dependent and gap-independent). We also show the sketch of regret analysis, which presents a novel way of proof, without the arms having fixed reward expectation. The analysis follows by characterizing the dynamics of UCB, for whom the intuition has been discussed in \Cref{sec:introduction}. 

\subsection{Algorithm}

Denote $N_i(t)$ to be the number of times that $a_i$ is included in the selected set $s$ up to time $t$, $C_i(t)$ to be the cumulative reward of arm $a_i$ at time $t$. We have the algorithm shown in \Cref{alg:UCB}.

\Cref{alg:UCB} extends the standard $\alpha$-UCB algorithm. It selects a set of arms with top-$k$ UCB in each steps. It is worth noting that \Cref{alg:UCB} only keeps cumulative reward of the arms in $\Acal$, without accounting for any set-dependent information. Though it may seem contradictory to the set-dependent reward distribution, we will show that \Cref{alg:UCB} leads to small regret.

\begin{algorithm}[h]
\centering 
\begin{algorithmic}[1]
\STATE \textbf{Task:} Given $\Acal$, minimize the regret of not selecting the best $n$-choose-$k$ subset of $\Acal$
\STATE \textbf{Input:} arm set $\Acal$, set size $k$,time horizon $T$
\STATE \textbf{Parameter:} A problem independent constant $\alpha$. Normally set to $2$
\STATE \textbf{Initialize:} $\textit{UCB}_i(1) = \textit{INF}$, $N_i(1) = 0$, $C_i(1)= 0$ for all arm $a_i\in\Acal$
\FOR{$t=1$ \textbf{to} $T$}
\STATE Construct set $s(t)$ with arms that have top-$k$ $\textit{UCB}_i(t)$, ties break randomly. For all $a_i\in s(t)$, Set $N_i(t+1) =N_i(t) + 1$
\STATE Observe feedback. Set $C_i(t+1) = C_i(t)+X_{i, s(t)}$ 
\STATE $\textit{UCB}_i(t+1) = C_i(t+1)/N_i(t+1)+\sqrt{\alpha\log T/N_i(t+1)}$, for all arm $a_i \in s(t)$, and
$\textit{UCB}_{i}(t+1)$ = $\textit{UCB}_{i}(t)$, for all other arms.
\ENDFOR 
\end{algorithmic}
\caption{\textsc{UCB for Combinatorial Bandits without Total Order for Arms} }
\label{alg:UCB}
\end{algorithm}

\subsection{Regret Bound}
We first present the gap-dependent regret bound. Let $\epsilon = \min_{s \neq s^{*}} \EE\sbr{Q(s^{*}) - Q(s)}$ denote the minimum gap in expected reward between the optimal set $s^{*}$ and any sub-optimal set $s$. Recall that $k$ is the size of the selected set $s$, and $n$ is the size of $\Acal$.

\begin{theorem}[Gap-dependent regret upper bound]\label{thm:instance_dependent_upper_bound} For combinatorial bandits problem under \Cref{assumption:weak}, run \Cref{alg:UCB} with parameter $\alpha \ge 2$, we have
    \begin{align*}
        R(T) \le \frac{8 \alpha k^{\frac{3}{2}} n \log T}{\epsilon} + \frac{30 \alpha k^{2} n \log T}{\epsilon} + n = O\rbr{\frac{\alpha k^{2} n \log T}{\epsilon}}.
    \end{align*}
\end{theorem}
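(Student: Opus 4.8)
The plan is to show that, on a high-probability concentration event, the selected set coincides with $s^*$ after a bounded number of steps, and that every earlier step costs at most $k$ regret. Let $\Ecal$ be the event that $\abs{C_i(t)/N_i(t) - \bar P_i(t)} \le \sqrt{\alpha\log T/N_i(t)}$ holds for all arms $a_i$ and all $t$, where $\bar P_i(t) = \frac{1}{N_i(t)}\sum_{\tau \le t:~a_i \in s(\tau)} P(a_i\mid s(\tau))$ is the running average of the (set-dependent) means. Since the observations of $a_i$ form a bounded martingale-difference sequence around $\bar P_i(t)$, an Azuma--Hoeffding bound with a union over arms and over the at most $T$ values of $N_i(t)$ gives $\PP(\Ecal^c) = O(n T^{1-2\alpha})$, so for $\alpha \ge 2$ the contribution of $\Ecal^c$ to $R(T)$ is the additive $O(n)$ term.

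First I would establish two structural facts on $\Ecal$. (i) For every optimal arm $a_i \in s^*$, \Cref{assumption:weak} gives $P(a_i\mid s) \ge P(a_i\mid s^*)$ for every $s$ containing $a_i$; hence $\bar P_i(t) \ge P(a_i\mid s^*)$, and combined with the lower tail in $\Ecal$ this yields $\textit{UCB}_i(t) = C_i(t)/N_i(t) + \sqrt{\alpha\log T/N_i(t)} \ge \bar P_i(t) \ge P(a_i\mid s^*)$ for all $t$. (ii) Writing $\rho^* = \min_{a_i \in s^*} P(a_i\mid s^*)$, a single-swap argument shows $\rho^* \ge \epsilon$: replacing the lowest-value optimal arm by any $b \notin s^*$ yields a sub-optimal set $s$, and since the $k-1$ retained optimal arms only gain reward under \Cref{assumption:weak}, the gap satisfies $\epsilon \le \EE[Q(s^*)-Q(s)] \le P(a_{\min}\mid s^*) = \rho^*$.

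Combining (i) with the dynamics lemma (\Cref{ob:dynamics_UCB}) gives an absorption property: if $\rho(t) \le P(a_i\mid s^*)$ for some $a_i \in s^*$, then $a_i$ can never again leave the selected set, since $a_i \notin s(t')$ would force $\textit{UCB}_i(t') \le \rho(t') \le \rho(t) \le P(a_i\mid s^*) \le \textit{UCB}_i(t')$. In particular, once $\rho(t) \le \rho^*$ \emph{all} optimal arms are captured simultaneously, so $s(t') = s^*$ and no further regret is incurred. Hence $R(T) \le k\,\abs{\cbr{t : \rho(t) > \rho^*}} + O(n)$, and it remains to bound the number of steps with $\rho(t) > \rho^*$. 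Every arm pulled at such a step has UCB exceeding $\rho(t) > \rho^* \ge \epsilon$; for a non-optimal arm $b$ this means, via the upper tail in $\Ecal$, that $\bar P_b(\tau) + 2\sqrt{\alpha\log T/N_b(\tau)} > \rho^*$, which after $b$ has been pulled $\Omega(\alpha\log T/\epsilon^2)$ times forces its running mean $\bar P_b(\tau)$ to remain above a constant multiple of $\epsilon$. Each regret-bearing step contains at least one such $b$, so the number of regret-bearing steps is at most $\sum_{b \notin s^*} N_b$, and charging $k$ regret to each step gives a bound of the stated order.

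The main obstacle is precisely this last step: unlike classical UCB, a non-optimal arm $b$ has \emph{no} fixed mean, and \Cref{assumption:weak} controls $P(b\mid s)$ only when $b$ is paired with the optimal arms (the swap bound $P(b\mid s) \le \rho^* - \epsilon$ holds only when the other $k-1$ slots are filled by the top optimal arms). In an earlier phase, where only $m < k-1$ optimal arms have been captured, $b$ may legitimately have a large set-dependent mean, so $\bar P_b(\tau)$ cannot be bounded pull-by-pull. Resolving this requires a phase decomposition indexed by how many optimal arms have been captured, equivalently by the level of $\rho$ relative to the order statistics $P(a_{(1)}\mid s^*) \ge \cdots \ge P(a_{(k)}\mid s^*)$, controlling within each phase both the pulls needed to drive $\rho$ past the next order statistic and the mistakes made against not-yet-suppressed arms. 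The bookkeeping across these $k$ phases, each costing an $O(n\log T/\epsilon)$-type pull budget, is what produces the two terms $8\alpha k^{3/2} n\log T/\epsilon$ and $30\alpha k^2 n\log T/\epsilon$; I expect isolating the per-phase pull budget and verifying that absorption carries over monotonically between phases to be the most delicate part of the argument.
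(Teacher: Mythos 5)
Your skeleton matches the paper's strategy — concentration event, validity of the optimal arms' UCBs under \Cref{assumption:weak}, the absorption property as $\rho(t)$ crosses the order statistics $P(a_{(1)}\mid s^*)\ge\cdots\ge P(a_{(k)}\mid s^*)$, and a phase decomposition indexed by how many optimal arms have been captured. But the proof has a genuine gap in exactly the place you flag, and the part of the argument you do commit to is quantitatively insufficient. First, the core technical content — bounding the per-phase budget of sub-optimal selections $t_l - t_l'$ — is left entirely unproven; this is the paper's \Cref{lemma:suboptimal_set_upper_bound}, whose proof occupies several pages and requires a recursive bookkeeping scheme (the quantities $\sigma_{ij}$, $c_l$, $d_l$ and the function $f(i,j)$ of \Cref{lemma:bound_sigma,lemma:f}) to prevent the error from earlier phases compounding into an extra factor of $k$. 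Your per-arm route (``after $b$ has been pulled $\Omega(\alpha\log T/\epsilon^2)$ times its running mean stays above a constant multiple of $\epsilon$'') indeed leads nowhere, as you note: a non-optimal arm's empirical mean can legitimately be large, so no per-arm pull count is bounded. The paper's replacement is an \emph{aggregate} argument (\Cref{lemma:strong-sum}): sum the inequality $2\sqrt{\alpha N_i(t_l)\log T}\ge N_i(t_l)\,\textit{UCB}_i(t_l)-\sum_c P(a_i\mid s(c))$ over \emph{all} arms, apply Cauchy--Schwarz to get $2\sqrt{\alpha k n t_l\log T}$ on the left, and use the fact that each sub-optimal selection loses at least $\epsilon$ of total probability mass on the right. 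That device does not appear in your proposal.

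Second, your final accounting step is wrong in a way that changes the exponents. Charging $k$ regret to each step with $\rho(t)>\rho^*$ and $s(t)\ne s^*$ gives at best $k\,(t_k-t_k')$, and since $t_k-t_k'=O(\alpha k^2 n\log T/\epsilon^2)$ this yields $O(\alpha k^3 n\log T/\epsilon^2)$ — worse than the claimed bound in both $k$ and $\epsilon$. To obtain $1/\epsilon$ rather than $1/\epsilon^2$ one must express the cumulative regret itself through the same aggregate inequality, so that the leading term scales as $\sqrt{\alpha k n (t_k-t_k')\log T}$ (the square root converts $1/\epsilon^2$ into $1/\epsilon$); and to obtain $k^2$ rather than $k^3$ one must charge a step in phase $l$ only the per-step gap $\delta_{lk}=P(a_l\mid s^*)-P(a_k\mid s^*)$ rather than $k$, and then control $\sum_{l}\delta_{lk}(t_l-t_l')$ via \Cref{lemma:delta-times-t}. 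This is the paper's \Cref{lemma:regret_decomposition}; without it, your decomposition $R(T)\le k\,\abs{\cbr{t:\rho(t)>\rho^*}}+O(n)$ cannot reach the stated rate. (A minor additional point: your swap argument for $\rho^*\ge\epsilon$ is correct but is not actually needed once the aggregate argument is in place.)
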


Due to the combinatorial nature of $\Scal$, we might see extremely small $\epsilon$. As complementary to \Cref{thm:instance_dependent_upper_bound}, we present the following gap-independent regret bound which holds for any $\epsilon$.

\begin{theorem}[Gap-independent regret upper bound]\label{thm:instance_independent_upper_bound}
  For combinatorial bandits problem under \Cref{assumption:weak}, run \Cref{alg:UCB} with parameter $\alpha \ge 2$, we have
   \begin{align*}
     R(T) \le 2\sqrt{\alpha kn T \log T} + 15k^2\sqrt{\alpha n T \log T}= O\rbr{k^{2}\sqrt{\alpha n T \log T}}.
   \end{align*}
\end{theorem}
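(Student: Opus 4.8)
The plan is to derive the gap-independent bound from the same machinery that underlies \Cref{thm:instance_dependent_upper_bound}, via a peeling (threshold) argument that trades the $1/\epsilon$ factor for a $\sqrt{T}$ factor. Recall that the gap-dependent proof works on a high-probability good event --- on which $\textit{UCB}_i(t)\ge P(a_i\mid s^*)$ for every $a_i\in s^*$ and all $t$, established by Hoeffding together with the $\sqrt{\alpha\log T/N_i(t)}$ confidence width and a union bound --- and controls the regret by bounding how long it takes for the running lower bound $\rho(t)$ to descend below $P(a_i\mid s^*)$ for each $a_i\in s^*$; after that moment \Cref{assumption:weak} and \Cref{ob:dynamics_UCB} force $a_i\in s(t)$ from then on. My first step is to extract from that argument a bound that does not mention the global gap $\epsilon$: for an arbitrary threshold $\Delta>0$, the number of rounds in which the selected set has instantaneous regret $\textit{reg}(t)>\Delta$ is, on the good event, at most of order $k^{c}\,n\log T/\Delta^{2}$ for a fixed power $c$, since the only place $\epsilon$ is used in the gap-dependent proof is to certify that every sub-optimal set has gap at least $\epsilon$.

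Second, I would split the horizon at a free threshold $\Delta$. Rounds whose selected set has gap at most $\Delta$ contribute at most $\Delta\,T$ to $R(T)$, because $\textit{reg}(t)\le\Delta$ there. Rounds whose selected set has gap exceeding $\Delta$ are controlled by the count above: grouping them dyadically by gap and summing (gap)$\times$(count) yields a contribution of order $k^{c}\,n\log T/\Delta$, dominated by the smallest scale $\Delta$. Adding the two pieces gives $R(T)\lesssim \Delta T + k^{c}\,n\log T/\Delta$, and optimizing over $\Delta$ produces the advertised $\sqrt{T}$ rate. The two summands $2\sqrt{\alpha k nT\log T}$ and $15k^{2}\sqrt{\alpha nT\log T}$ should emerge by carrying the two terms $8\alpha k^{3/2}n\log T/\epsilon$ and $30\alpha k^{2}n\log T/\epsilon$ of \Cref{thm:instance_dependent_upper_bound} separately through this balancing.

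The hard part will be re-deriving the high-gap count cleanly in terms of the threshold $\Delta$ rather than the true gap, and in particular tracking the powers of $k$. The subtlety is that forcing the selected set's gap below $\Delta$ does not require any single arm to be estimated to accuracy $\Delta$: because a set's gap is spread over its $k$ arms, it suffices (and is essentially necessary) to pin the per-arm $\textit{UCB}$ estimates only to accuracy of order $\Delta/k$. Carrying this $\Delta/k$ per-arm margin through the $\rho(t)$-dynamics of \Cref{ob:dynamics_UCB} and the resulting pull-count requirement $N_i(t)\gtrsim \alpha\log T/(\Delta/k)^{2}$ is exactly what inflates the count by extra factors of $k$ and ultimately yields the $k^{2}$ term, rather than the $k$ one would naively guess from a black-box application of the gap-dependent bound. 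I would therefore prove the count estimate from scratch at the level of individual arms, re-establishing the good event uniformly over the dyadic thresholds (a further union bound only costs an extra $\log T$ absorbed into the constants), and only at the very end balance $\Delta$ to obtain the stated two-term bound.
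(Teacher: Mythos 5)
Your high-level strategy --- introduce a threshold, bound the low-gap contribution by $\Delta T$, bound the high-gap contribution via the gap-dependent machinery, and balance --- is the same idea the paper uses, but the concrete decomposition you propose is not the paper's, and it leaves the load-bearing step unproven. The paper does not classify \emph{rounds} by their instantaneous regret and does not need a count of the form $\#\{t:\textit{reg}(t)>\Delta\}=O(k^c n\log T/\Delta^2)$, nor any dyadic peeling or re-established good event. It simply reuses \Cref{lemma:regret_decomposition} verbatim: the square-root term $2\sqrt{\alpha k n (t_k-t'_k)\log T}$ is handled by the trivial bound $t_k-t'_k\le T$, giving the first summand $2\sqrt{\alpha k n T\log T}$ with no thresholding at all; the threshold enters only in the remaining sum $\sum_{l=1}^{k-1}\delta_{lk}(t_l-t'_l)$, which is indexed by the \emph{arms of $s^*$}, not by rounds. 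Letting $m$ be the largest index with $\Delta_m+\epsilon\ge 10k\sqrt{\alpha n\log T/T}$, the paper applies \Cref{lemma:suboptimal_set_upper_bound} for $l\le m$ (yielding $50\alpha k^3 n\log T/(\Delta_m+\epsilon)\le 5k^2\sqrt{\alpha nT\log T}$) and the trivial bounds $t_l-t'_l\le T$, $\delta_{lk}\le\Delta_{m+1}\le 10k\sqrt{\alpha n\log T/T}$ for $l>m$ (yielding $10k^2\sqrt{\alpha nT\log T}$). That is the whole proof.

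The genuine gap in your plan is the count lemma you defer to "prove from scratch." The paper's analysis controls $t_l-t'_l$, i.e., the number of \emph{all} sub-optimal rounds before $\rho(t)$ descends past $P(a_l|s^*)$, as a function of the partial gaps $\Delta_l+\epsilon$; it never produces, and does not obviously yield, a count of rounds whose \emph{selected set} has gap exceeding an arbitrary $\Delta$ with the power of $k$ your balancing requires. Indeed, your own bookkeeping does not close: if each arm needed $N_i\gtrsim \alpha k^2\log T/\Delta^2$ pulls, the number of high-gap rounds would be $O(\alpha k n\log T/\Delta^2)$ (each round supplies $k$ pulls), the dyadic sum would give $O(\alpha kn\log T/\Delta)$, and balancing against $\Delta T$ would produce $O(\sqrt{\alpha k nT\log T})$ --- a $\sqrt{k}$ dependence, not the claimed $k^2$. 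The $k^2$ in the theorem actually arises from the $k^3$ numerator of the summed \Cref{lemma:suboptimal_set_upper_bound} bounds divided by the single factor of $k$ placed in the threshold $10k\sqrt{\alpha n\log T/T}$, plus $k$ below-threshold terms each contributing $\le k\sqrt{\alpha nT\log T}\cdot 10$; this structure is invisible in a round-by-round regret split. To repair your route you would have to either prove the high-gap round count with the correct $k$-dependence (which amounts to redoing \Cref{lemma:strong-sum} and the induction of \Cref{lemma:suboptimal_set_upper_bound} with $\Delta$ in place of $\epsilon$), or, more simply, adopt the paper's split over the indices $l\in[k]$ of the regret decomposition.
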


\subsection{Proof Sketch}

We first prove \Cref{ob:dynamics_UCB}. For any time step $t$, Recall $\rho'(t) = \min_{a_{i} \in s(t)}\textit{UCB}_{i}(t)$, and $\rho(t) = \min_{s \le t}\rho'(s)$. \Cref{ob:dynamics_UCB} claims that
  \begin{align*}
    \rho(t) \ge \textit{UCB}_{i}(t) \ge \rho(t)\rbr{1 - \frac{1}{N_{i}(T)}},\quad \forall a_{i} \notin s(t)
  .\end{align*}
\begin{proof}
  For any arm $a_{i} \notin s(t)$, let $t' \le t$ to be the last time step that $a_{i} \in s(t')$. We then have
  \begin{align*}
    C_{i}(t') + \sqrt{\alpha N_i(t') \log T } \ge \rho'(t')N_{i}(t') \ge \rho(t')N_{i}(t') \ge \rho(t)N_{i}(t')
  .\end{align*}
  The last step holds as $\rho(t)$ is non-increasing. With $C_{i}(t) \ge C_{i}(t')$ and $N_{i}(t) = N_{i}(t') + 1$, we have
  \begin{align*}
    C_{i}(t) + \sqrt{\alpha N_{i}(t) \log T } \ge \rho(t)\rbr{N_{i}(t) - 1}
  .\end{align*}
  Dividing both side by $N_{i}(t)$ gives the second inequality. It left to show $\rho(t) \ge \textit{UCB}_{i}(t),~\forall a_{i}\notin s(t)$. Let $t'' \le t$ be the last time step $\rho(t'') = \rho(t)$. It implies 
  \begin{align*}
    \rho'(\tau) > \rho(t'') = \rho(t) \ge \textit{UCB}_{i}(t''), \quad\forall \tau\in(t'', t], a_{i}\notin s(t'')
  .\end{align*} Notice that $\textit{UCB}_{i}(\tau + 1) = \textit{UCB}(\tau)$ if $a_{i} \notin s(\tau)$.
  Therefore for any $a_{i} \notin s(t'')$, it implies $a_{i}\notin s(\tau), \forall \tau\in[t'', t]$. Since there are $n-k$ arms not in $s(t)$ and same number of arms not in $s(t'')$, we have $a_{i} \notin s(t'') \iff a_{i} \notin s(t)$. Thus
    \begin{align*}
      \textit{UCB}_{i}(t) = \textit{UCB}_{i}(t'') \le \rho'(t'') = \rho(t),~\forall a_{i} \notin s(t).
    \end{align*}
  This completes the proof.
\end{proof}

With loss of generality, we assume $s^{*} = \cbr{a_{1}, a_{2},\cdots,a_{k}}$ with $P(a_{1}|s^{*}) \ge P(a_{2}|s^{*})\ge \cdots \ge P(a_{k}|s^{*})$. Let time $t_{l}$ be the last time we have $\rho(t_{l}) \ge \PP(a_{l}|s^{*})$ for $l \le k$, we have the following corollary of \Cref{ob:dynamics_UCB}.
\begin{corollary}\label[corollary]{coro:arm-selection}
  For all time steps $t$ after $t_{l}$, we have $\cbr{a_{1}, a_{2},\cdots, a_{l}} \subset s(t)$.
\end{corollary}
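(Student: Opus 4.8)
The plan is to derive \Cref{coro:arm-selection} as a consequence of \Cref{ob:dynamics_UCB} together with one additional ingredient: a high-probability lower bound on the UCB values of the optimal arms. Concretely, I would first establish a good event $\Ecal$ on which $\textit{UCB}_j(t) \ge \PP(a_j|s^*)$ holds simultaneously for every $j \le k$ and every $t \in [T]$. This is exactly the place where \Cref{assumption:weak} is used: whenever an arm $a_j \in s^*$ is placed in a selected set $s(\tau)$, that set is either $s^*$ itself or sub-optimal, and in both cases \Cref{assumption:weak} guarantees $\PP(a_j|s(\tau)) \ge \PP(a_j|s^*)$. Hence $C_j(t)$ is a sum of independent Bernoulli variables whose means are each at least $\PP(a_j|s^*)$, so $\EE[C_j(t)] \ge N_j(t)\,\PP(a_j|s^*)$.

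Next I would control the downward deviation of $C_j(t)/N_j(t)$ by a concentration (Hoeffding-type) bound. Choosing the deviation scale equal to the confidence radius $\sqrt{\alpha\log T/N_j(t)}$ makes each failure probability $T^{-2\alpha}$, and a union bound over the $k$ optimal arms and the at most $T$ possible values of the counter $N_j(t)$ gives $\PP(\Ecal) \ge 1 - kT^{1-2\alpha}$, which is negligible for $\alpha \ge 2$. On $\Ecal$ the confidence radius exactly offsets the deviation, so that
$\textit{UCB}_j(t) = C_j(t)/N_j(t) + \sqrt{\alpha\log T/N_j(t)} \ge \EE[C_j(t)]/N_j(t) \ge \PP(a_j|s^*)$, establishing the claimed lower bound on the optimal arms' UCB.

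Given $\Ecal$, the corollary follows in a few lines. Fix any $t > t_l$. Since $t_l$ is the last step with $\rho(t_l) \ge \PP(a_l|s^*)$ and $\rho$ is non-increasing, we have $\rho(t) < \PP(a_l|s^*)$. For any index $j \le l$, the ordering $\PP(a_1|s^*) \ge \cdots \ge \PP(a_l|s^*)$ yields $\PP(a_j|s^*) \ge \PP(a_l|s^*) > \rho(t)$, and combining with $\Ecal$ gives $\textit{UCB}_j(t) \ge \PP(a_j|s^*) > \rho(t)$. I then invoke the contrapositive of \Cref{ob:dynamics_UCB}: that lemma asserts $\textit{UCB}_i(t) \le \rho(t)$ for every $a_i \notin s(t)$, so any arm whose UCB strictly exceeds $\rho(t)$ must belong to $s(t)$. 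Therefore $a_j \in s(t)$ for all $j \le l$, i.e.\ $\cbr{a_1, \dots, a_l} \subset s(t)$.

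The main obstacle is the concentration step rather than the final deduction, because the per-pull means $\PP(a_j|s(\tau))$ differ across the \emph{adaptively} chosen sets $s(\tau)$, so a naive i.i.d.\ Hoeffding bound does not directly apply. The clean fix is to treat $C_j(t) - \EE[C_j(t)]$ as a sum of bounded martingale differences along the filtration generated by the algorithm's history and apply an Azuma--Hoeffding bound, with a union bound over the value of the random counter $N_j(t)$. The decisive feature is that \Cref{assumption:weak} makes every increment have conditional mean at least $\PP(a_j|s^*)$, so only the lower tail is relevant and a one-sided bound suffices.
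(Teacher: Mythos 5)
Your proposal is correct and follows essentially the same route as the paper: the paper's own justification (sketched in \Cref{sec:introduction} and backed by \Cref{lemma:validity_of_UCB}) is precisely that \Cref{assumption:weak} makes every pull of $a_j\in s^*$ have conditional mean at least $P(a_j|s^*)$, an Azuma--Hoeffding martingale bound then gives $\textit{UCB}_j(t)\ge P(a_j|s^*)$ with high probability, and combining this with $\rho(t)<P(a_l|s^*)$ after $t_l$ and the contrapositive of \Cref{ob:dynamics_UCB} forces $a_1,\dots,a_l$ into $s(t)$. Your closing remark about needing a martingale rather than i.i.d.\ concentration is exactly the point of the paper's \Cref{lemma:validity_of_UCB}.
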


\Cref{coro:arm-selection} shows that after the time step $t_l$, at which $\rho(t)$ falls below $P(a_l | s^*)$, then all subsequent $s(t)$ will always include $\cbr{a_1, \cdots, a_l}$. The next lemma shows the key to bound $t_l$.

\begin{lemma}\label[lemma]{coro:sum}
    For the time step $t_l$, we have
  \begin{align}\label{eq:coro-sum}
    2\sqrt{\alpha k n t_{l} \log T} \ge kt_{l} P(a_{l} | s^{*}) - \sum_{t=1}^{t_{l}}\sum_{i=1}^{n}P(a_{i}|s(t)) - n P(a_{l}|s^{*})
  .\end{align}
\end{lemma}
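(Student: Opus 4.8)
The plan is to turn the defining property of $t_l$ into a uniform lower bound on the UCBs of every played arm, and then invert the UCB inequality through concentration to lower bound $\sum_{t=1}^{t_l}\sum_i P(a_i|s(t))$. Write $p := P(a_l|s^*)$. Because $\rho$ is non-increasing with $\rho(t_l)\ge p$, for every $\tau\le t_l$ we get $\rho'(\tau)\ge\rho(\tau)\ge\rho(t_l)\ge p$; and since $\rho'(\tau)=\min_{a_i\in s(\tau)}\textit{UCB}_i(\tau)$, this gives the key bound $\textit{UCB}_i(\tau)\ge p$ for all $\tau\le t_l$ and all $a_i\in s(\tau)$.

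Let $\mu_i(\tau):=\frac{1}{N_i(\tau)}\sum_{\tau'<\tau,\,a_i\in s(\tau')}P(a_i|s(\tau'))$ be the running average of the set-dependent means of $a_i$ over its past plays. On the standard concentration event---obtained by applying Azuma--Hoeffding to the bounded martingale differences $X_{i,s(\tau')}-P(a_i|s(\tau'))\in[-1,1]$ and union bounding over arms and play counts---we have $C_i(\tau)/N_i(\tau)\le\mu_i(\tau)+\sqrt{\alpha\log T/N_i(\tau)}$. Combining with $\textit{UCB}_i(\tau)=C_i(\tau)/N_i(\tau)+\sqrt{\alpha\log T/N_i(\tau)}\ge p$ yields $\sum_{\tau'<\tau,\,a_i\in s(\tau')}P(a_i|s(\tau'))\ge p\,N_i(\tau)-2\sqrt{\alpha N_i(\tau)\log T}$. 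To sidestep the uninitialized first play (where $N_i=0$ and $\textit{UCB}_i=\textit{INF}$), I would evaluate this at the \emph{last} play $\tau_{i,m_i}\le t_l$ of each arm, where $m_i:=N_i(t_l+1)$ so that $N_i(\tau_{i,m_i})=m_i-1$; dropping the nonnegative final term and using $\sqrt{m_i-1}\le\sqrt{m_i}$ gives $\sum_{t\le t_l,\,a_i\in s(t)}P(a_i|s(t))\ge p(m_i-1)-2\sqrt{\alpha m_i\log T}$, which also holds trivially when $m_i\le 1$.

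Summing over the at most $n$ played arms and using $\sum_i m_i=kt_l$ (each step plays exactly $k$ arms) gives $\sum_i(m_i-1)\ge kt_l-n$, the source of the $-nP(a_l|s^*)$ correction; Cauchy--Schwarz gives $\sum_i\sqrt{m_i}\le\sqrt{n\sum_i m_i}=\sqrt{nkt_l}$, the source of the $2\sqrt{\alpha knt_l\log T}$ term. Since $\sum_i\sum_{t\le t_l,\,a_i\in s(t)}P(a_i|s(t))=\sum_{t=1}^{t_l}\sum_i P(a_i|s(t))$, rearranging $\sum_{t=1}^{t_l}\sum_i P(a_i|s(t))\ge pkt_l-pn-2\sqrt{\alpha nkt_l\log T}$ yields exactly \eqref{eq:coro-sum}.

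The main obstacle is the concentration step in the set-dependent regime: unlike standard UCB, $C_i(\tau)/N_i(\tau)$ concentrates not around a fixed arm value but around the running average $\mu_i(\tau)$ of the means $P(a_i|\cdot)$ over the adaptively chosen sets in which $a_i$ was placed, so one must argue via a martingale-difference sequence and a union bound over the random number of plays rather than via i.i.d.\ Hoeffding. A secondary point is the first-play $\textit{INF}$, which the last-play evaluation absorbs into the $m_i-1$ counts (hence the $-nP(a_l|s^*)$ term). Finally, \eqref{eq:coro-sum} holds on this concentration event, whose low-probability failure is what contributes the lower-order additive terms in \Cref{thm:instance_dependent_upper_bound} and \Cref{thm:instance_independent_upper_bound}.
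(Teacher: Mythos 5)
Your proof is correct and takes essentially the same route as the paper's: Azuma-type concentration of $C_i(\cdot)$ around the running sum of set-dependent means (the paper's \Cref{coro:validity_of_UCB}), the lower bound $\textit{UCB}_i \ge P(a_l|s^*)$ for every played arm obtained from the definition of $t_l$ and the monotonicity of $\rho$, and a final Cauchy--Schwarz over the play counts. The only bookkeeping difference is that you evaluate the UCB inequality at each arm's last play before $t_l$ to generate the $-nP(a_l|s^*)$ slack, whereas the paper evaluates everything at $t_l$ and invokes \Cref{ob:dynamics_UCB} for the arms not in $s(t_l)$; both yield the same correction term.
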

\begin{proof}
  By \Cref{coro:validity_of_UCB}, we have $2\sqrt{\alpha N_{i}(t_{l})\log T } \ge N_{i}(t_{l}) \textit{UCB}_{i}(t_{l}) - \sum_{t=1}^{t_{l}}P(a_{i}|s(t))$ for all $a_{i}\in\Acal$ with high probability. Combining with \Cref{ob:dynamics_UCB} and summing for all $i \in [n]$ give the desired inequality, with left-hand side follows from $2\sqrt{\alpha k n t_{l} \log T} \ge \sum_{i=1}^{n} 2\sqrt{\alpha N_{i}(t_{l})\log T}$ by Cauchy-Schwarz inequality.
\end{proof}

Intuitively, the left-hand side of \Cref{eq:coro-sum} scales as $\Theta(\sqrt{t_{l}})$ and the right-hand side scales as $\Theta(t_{l})$. Therefore it can be used to upper bound $t_{l}$. However, the second term on the right-hand side of \Cref{eq:coro-sum} has minus sign before it, which requires a more careful analysis.

Based on a stronger version of \Cref{coro:sum} (see \Cref{lemma:strong-sum}), we can bound the number of times that a sub-optimal $s$ is selected before $t_{l}$. Let $t_{l}'$ be the number of times that $s^{*}$ is selected before $t_{l}$. 
\begin{lemma}[Bound the times of selecting sub-optimal set]\label[lemma]{lemma:suboptimal_set_upper_bound}
  We can bound $t_{l} - t_{l}'$ as,
  \begin{align*}
    t_{l} - t_{l}' \le \frac{40\alpha l k n \log T}{\rbr{\Delta_{l} + \epsilon}^{2}}, \text{ if $\Delta_{l} \ge \frac{\epsilon}{10}$;\quad and \quad}t_{l} - t_{l}' \le \frac{40\alpha l k n \log T}{\epsilon^{2}}, \text{ otherwise}
  ,\end{align*}
  where $\Delta_{l}\coloneqq \sum_{i=l}^{k}\sbr{P(a_{l}|s^{*}) - P(a_{i}|s^{*})}$.
\end{lemma}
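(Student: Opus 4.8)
The plan is to bound $m \coloneqq t_l - t_l'$, the number of sub-optimal selections in $[1,t_l]$, by pitting a lower bound on the regret accumulated over those steps against the stochastic fluctuation of the arms outside $s^*$. The first — and most important — move is to see why \Cref{coro:sum} is not directly enough: its deviation term $2\sqrt{\alpha k n t_l \log T}$ carries $t_l$, which contains the (arbitrarily large) count $t_l'$ of optimal pulls and so can never control $m$. The stronger version \Cref{lemma:strong-sum} removes this by applying the validity bound \Cref{coro:validity_of_UCB} only to the arms $a_i \notin s^*$ and lower-bounding their confidence value through \Cref{ob:dynamics_UCB} as $\mathit{UCB}_i(t_l) \ge \rho(t_l)(1 - 1/N_i(T)) \ge P(a_l|s^*)(1 - 1/N_i(T))$. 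Because optimal steps select no arm outside $s^*$, the total count $\sum_{i \notin s^*} N_i(t_l)$ is at most $k m$, so the Cauchy--Schwarz step now produces a deviation of order $\sqrt{\alpha k n m \log T}$, tied to the very quantity I want to bound. The arms $a_j \in s^*$ need no concentration at all: under \Cref{assumption:weak} one has $\mathit{UCB}_j(t_l) \ge P(a_j|s^*)$ with high probability and $P(a_j|s) \ge P(a_j|s^*)$ whenever $a_j$ is shared, so the shared-arm rewards cancel favorably and these arms contribute only exact, lower-order bookkeeping.

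Next I would convert the strong sum into a statement about the gap. Writing the per-step regret of a sub-optimal $s(t)$ as $\EE[Q(s^*)] - \EE[Q(s(t))]$ and cancelling the shared arms via \Cref{assumption:weak}, this regret is at most $\sum_{a_j \in s^* \setminus s(t)} P(a_j|s^*)$ minus the reward collected by the wrong arms. By \Cref{coro:arm-selection}, for every step after $t_{l-1}$ the excluded optimal arms lie in $\{a_l,\dots,a_k\}$ and hence have value at most $P(a_l|s^*)$; pairing the resulting $P(a_l|s^*)$-valued slots against $\EE[Q(s^*)] = \sum_{i=1}^k P(a_i|s^*)$ surfaces exactly $\Delta_l = \sum_{i=l}^k [P(a_l|s^*) - P(a_i|s^*)]$. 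Combining this with the uniform per-step gap $\epsilon$ and the trivial $t_l \ge m$, the two effects add, so the regret of the post-$t_{l-1}$ sub-optimal steps is at least $m(\Delta_l + \epsilon)$ up to the $O(n)$ additive slack $n P(a_l|s^*)$ inherited from \Cref{coro:sum}. Together with the $\sqrt{\alpha k n m \log T}$ deviation this gives an inequality of the shape $m(\Delta_l + \epsilon) \lesssim \sqrt{\alpha k n m \log T} + (\text{lower order})$, which is quadratic in $\sqrt{m}$ and solves to $m \lesssim \alpha k n \log T / (\Delta_l + \epsilon)^2$.

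Finally I would handle the sub-optimal steps before $t_{l-1}$, for which \Cref{coro:arm-selection} gives no control over which optimal arms are dropped, so the clean $(\Delta_l+\epsilon)$ accounting fails there. There are exactly $t_{l-1} - t_{l-1}'$ such steps, so I would run an induction on $l$: assuming the claimed bound at level $l-1$ and using the monotonicity $\Delta_{l-1} \ge \Delta_l$ (which follows from $P(a_{l-1}|s^*) \ge P(a_l|s^*)$), the pre-$t_{l-1}$ contribution is itself at most $40\alpha(l-1)kn\log T/(\Delta_l+\epsilon)^2$, and adding the single new increment produces the factor $l$. The concluding case split is cosmetic: when $\Delta_l < \epsilon/10$ one just replaces $(\Delta_l+\epsilon)$ by $\epsilon$, losing only a constant. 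I expect the main obstacle to be the tension inside \Cref{lemma:strong-sum} between the two requirements that pull in opposite directions — localizing the stochastic error so it scales as $\sqrt{m}$ (which needs one to drop the optimal arms from the concentration sum) while simultaneously preserving the full $\Delta_l$ advantage (which is most naturally read off from the global, $t_l$-scale reward comparison) — and in checking that the residual cross terms and the shared-arm cancellations are genuinely lower order rather than $\Theta(km)$.
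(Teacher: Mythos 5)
Your skeleton matches the paper's: invoke \Cref{lemma:strong-sum}, read off a quadratic inequality in $\sqrt{t_l-t_l'}$ of the shape $(\Delta_l+\epsilon)\,m \lesssim \sqrt{\alpha k n\, m\log T}+\sum_{i<l}\delta_{il}(t_i-t_i')$, and induct on $l$ to control the cross term. But two steps, as you describe them, do not go through. First, the deviation term in \Cref{lemma:strong-sum} is $\sqrt{4\alpha kn\log T\,(t_l-\tfrac{l}{k}t_l')}$, not $\sqrt{\alpha kn\,m\log T}$ with $m=t_l-t_l'$: the concentration sum runs over all arms $a_i$ with $i\ge l+1$, including the optimal arms $a_{l+1},\dots,a_k$, so a residual $\tfrac{k-l}{k}t_l'$ survives and must be cancelled against the right-hand side. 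The paper does this by isolating the term $\tfrac{\epsilon(k-l)-\Delta_l l}{k}t_l'$ and showing $\epsilon(k-l)-\Delta_l l\le\tfrac{2}{3}(k-l)(\Delta_l+\epsilon)$ \emph{precisely because} $\Delta_l\ge\epsilon/10$; when $\Delta_l<\epsilon/10$ this fails and a separate run of the argument against the bare $\epsilon$ is required. The case split is therefore structural, not cosmetic as you assert.

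Second, and more seriously, your induction does not close. The cross term is $\sum_{i=1}^{l-1}\delta_{il}(t_i-t_i')$, a weighted sum over \emph{all} earlier levels, not the single increment $t_{l-1}-t_{l-1}'$. Substituting the inductive hypothesis $t_i-t_i'\le 40\alpha i kn\log T/(\Delta_i+\epsilon)^2$ leaves you needing $\sum_{i<l} i\cdot\frac{\delta_{il}(\Delta_l+\epsilon)}{(\Delta_i+\epsilon)^2}=O(l)$ with a small constant, and monotonicity of the $\Delta_i$ does not give this: the available estimate (\Cref{lemma:bound_sigma}) bounds $\sum_{j\ge i}\sigma_{ij}$ over the \emph{second} index for fixed $i$, whereas your induction produces a weighted sum over the \emph{first} index. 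This is exactly why the paper abandons the naive induction. It normalizes $t_l-t_l'=c_l\cdot 4\alpha kn\ln T/(\Delta_l+\epsilon)^2$, derives the recursion $c_l\le 3.08+0.275\sum_{i<l}\sigma_{il}c_i$, and unrolls it completely through the path-sum functions $f(i,j)$ of \Cref{lemma:f}, whose uniform bound $f(i,j)\le 1$ (resting on \Cref{lemma:bound_sigma}) is what finally yields $c_l\le 10l$ and hence the factor $40\alpha l$. You correctly flagged the cross term as the main obstacle, but the resolution you propose for it is the part that fails; without the $\sigma_{il}$/$f(i,j)$ machinery (or an equivalent), the recursion can a priori grow super-linearly in $l$.
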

The next lemma connects regret $R(T)$ to $t_{l} - t'_{l}$ for $l \le k$.
\begin{lemma}[Regret decomposition]\label[lemma]{lemma:regret_decomposition}
  For the regret at time $T$, we have
  \begin{align*}
    R(T) \le 2\sqrt{\alpha k n (t_{k} - t'_{k})\log T} + \sum_{l=1}^{k-1}\delta_{lk}\rbr{t_{l} - t'_{l}} + nP(a_{k}|s^{*})
  .\end{align*}
  where $\delta_{ij} \coloneqq P(a_{i} | s^{*}) - P(a_{j}| s^{*})$.
\end{lemma}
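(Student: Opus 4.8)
The plan is to localize all regret to the window $[1,t_k]$, bound the per-step regret by the optimal-reward mass of the displaced arms, and then convert the resulting sum into the three stated terms. For the localization, I would apply \Cref{coro:arm-selection} with $l=k$: after $t_k$ we have $\cbr{a_1,\dots,a_k}\subset s(t)$, and since $\abs{s(t)}=k=\abs{s^*}$ this forces $s(t)=s^*$, hence $\textit{reg}(t)=0$ for every $t>t_k$. Thus $R(T)=\sum_{t=1}^{t_k}\textit{reg}(t)$, and no regret is incurred outside this window.

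Next I would bound a single step. Writing $\textit{reg}(t)=\EE\sbr{Q(s^*)}-\EE\sbr{Q(s(t))}$ and discarding the non-negative contributions of arms in $s(t)\setminus s^*$, \Cref{assumption:weak} gives $P(a|s(t))\ge P(a|s^*)$ for every common arm $a\in s(t)\cap s^*$, so $\EE\sbr{Q(s(t))}\ge \sum_{a_i\in s(t)\cap s^*}P(a_i|s^*)$. Subtracting this from $\EE\sbr{Q(s^*)}=\sum_{i=1}^{k}P(a_i|s^*)$ yields the clean per-step inequality $\textit{reg}(t)\le \sum_{a_i\in s^*\setminus s(t)}P(a_i|s^*)$. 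Summing over $t\le t_k$ and letting $M_i$ denote the number of steps $t\le t_k$ with $a_i\notin s(t)$, this becomes $R(T)\le \sum_{i=1}^{k}P(a_i|s^*)M_i$. By \Cref{coro:arm-selection} the arm $a_i$ is present at every step after $t_i$ and at every step where $s^*$ is chosen, so $M_i\le t_i-t_i'$.

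I would then split each optimal probability around the smallest, $P(a_i|s^*)=P(a_k|s^*)+\delta_{ik}$, which separates $R(T)$ into a baseline term $P(a_k|s^*)\sum_{i=1}^{k}M_i$ and a gap term $\sum_{i=1}^{k-1}\delta_{ik}M_i$. Since $\delta_{ik}\ge 0$, the gap term is at most $\sum_{l=1}^{k-1}\delta_{lk}(t_l-t_l')$, which is exactly the stated middle term. It then remains only to bound the baseline term by $2\sqrt{\alpha kn(t_k-t_k')\log T}+nP(a_k|s^*)$.

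The baseline term is where the real work lies, and I expect it to be the main obstacle. The useful reformulation is $\sum_{i=1}^{k}M_i=\sum_{\tau\le t_k,\,s(\tau)\neq s^*}\abs{s^*\setminus s(\tau)}$, the total count of optimal arms displaced on sub-optimal steps. To control it I would invoke the strengthened form of \Cref{coro:sum}, namely \Cref{lemma:strong-sum}, at level $l=k$: because $\rho(t_k)\ge P(a_k|s^*)$, UCB-validity together with \Cref{ob:dynamics_UCB} lower-bounds each $N_i(t_k)\textit{UCB}_i(t_k)$ by $N_i(t_k)P(a_k|s^*)-P(a_k|s^*)$, while Cauchy--Schwarz applied only to the sub-optimal pulls, for which $\sum_i N_i^{\mathrm{sub}}(t_k)=k(t_k-t_k')$, produces the factor $\sqrt{k(t_k-t_k')}$ rather than $\sqrt{kt_k}$. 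Feeding back the per-step consequence of \Cref{assumption:weak}, $\EE\sbr{Q(s(\tau))}\ge P(a_k|s^*)\abs{s(\tau)\cap s^*}$, lets the reward sum telescope against $k(t_k-t_k')P(a_k|s^*)$, leaving precisely $P(a_k|s^*)\sum_{i=1}^{k}M_i$ on the left and $2\sqrt{\alpha kn(t_k-t_k')\log T}+nP(a_k|s^*)$ on the right. The delicate point, already flagged after \Cref{coro:sum}, is the minus sign in front of the reward sum: one must restrict to sub-optimal steps, where high-reward non-optimal arms (the decoy in the illustrative example) can otherwise inflate the reward and obstruct the cancellation, so that the square-root scales with $t_k-t_k'$ and not with the potentially far larger $t_k$. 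Combining this baseline bound with the gap bound gives the claim.
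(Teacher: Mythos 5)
Your localization to $[1,t_k]$ via \Cref{coro:arm-selection} and your handling of the gap term ($M_i \le t_i - t_i'$, then $\sum_i \delta_{ik}M_i \le \sum_l \delta_{lk}(t_l-t_l')$) are fine and consistent with what the paper does. The proof breaks at the step you yourself flag as the main obstacle: the per-step bound $\textit{reg}(t)\le\sum_{a_i\in s^*\setminus s(t)}P(a_i|s^*)$ discards the reward collected by the arms in $s(t)\setminus s^*$, and that discarded reward is exactly what makes the lemma true. The resulting target, $P(a_k|s^*)\sum_{i=1}^k M_i \le 2\sqrt{\alpha k n (t_k-t_k')\log T}+nP(a_k|s^*)$, is false in general. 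Take independent rewards with $P(a_i|\cdot)=\tfrac12$ for $i\le k$ and $P(a_i|\cdot)=\tfrac12-\epsilon$ for $i>k$: then $\sum_{i=1}^k M_i=\sum_{i>k}N_i(t_k)=\Theta(n\log T/\epsilon^2)$, so the left side is $\Theta(n\log T/\epsilon^2)$, while the right side is only $\Theta(n\log T/\epsilon)$; your surrogate $\sum_i P(a_i|s^*)M_i$ overshoots the true regret (which is $\Theta(n\log T/\epsilon)$, since each sub-optimal pull costs only $\epsilon$, not $\tfrac12$) by a factor of $P(a_k|s^*)/\epsilon$. The "telescoping" you propose to rescue this does not go through: feeding $\EE\sbr{Q(s(\tau))}\ge P(a_k|s^*)\abs{s(\tau)\cap s^*}$ back into the UCB inequality for the arms $i>k$ produces $P(a_k|s^*)\sum_{i>k}N_i(t_k)$ on \emph{both} sides (the sub-optimal arms' collected reward $\sum_{i>k}\sum_c P_i(c)$ is itself of order $P(a_k|s^*)\sum_{i>k}N_i(t_k)$), and the inequality becomes vacuous rather than yielding the square-root term.

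The paper's proof avoids this by never splitting off a "baseline": it writes the regret exactly as $R(t_k)=\sum_{i=1}^{k}P(a_i|s^*)\,t_k-\sum_{i=1}^{n}\sum_{c=1}^{t_k}P_i(c)$, keeping the full collected-reward sum $\sum_{i=1}^{n}\sum_c P_i(c)$ --- including the reward earned by non-optimal arms such as the decoy --- and observes that \Cref{lemma:strong-sum} with $l=k$ lower-bounds the quantity $\sum_i P(a_i|s^*)t_k-\sum_{i,c}P_i(c)-\sum_{i<k}\delta_{ik}(t_i-t_i')-nP(a_k|s^*)$ by $-\sqrt{4\alpha kn(t_k-t_k')\ln T}$; rearranging gives the lemma in one line. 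The minus sign on the reward sum that the paper warns about after \Cref{coro:sum} is resolved inside \Cref{lemma:strong-sum} by crediting that reward against the UCB lower bounds of the non-optimal arms, not by discarding it. To fix your argument you would need to retain $\sum_{a_i\in s(t)\setminus s^*}P(a_i|s(t))$ in the per-step regret and cancel it against the exploration terms of those same arms, which is precisely what \Cref{lemma:strong-sum} packages.
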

Now we are ready to prove \Cref{thm:instance_dependent_upper_bound}, which gives the gap-dependent regret bound.
\begin{proof}
  Combining \Cref{lemma:suboptimal_set_upper_bound,lemma:regret_decomposition}, we have
  \begin{align*}
    R(T) \le \frac{8 \alpha k^{\frac{3}{2}} n \log T}{\epsilon} + \sum_{l=1}^{k-1}\delta_{lk}\rbr{t_{l} - t'_{l}} + nP(a_{k}|s^{*})
  .\end{align*}
  Directly applying \Cref{lemma:suboptimal_set_upper_bound} to the summation leads to a $O(k^{3})$ term. To obtain the $O(k^{2})$, we can use the \Cref{lemma:delta-times-t}, which gives
  \begin{align*}
    \sum_{i=1}^{k-1}\delta_{ik}\rbr{t_{i} - t'_{i}} \le \frac{30 \alpha k^{2} n \log T}{\epsilon}
  .\end{align*}
  Combining the two inequalities gives the $O\rbr{\frac{\alpha k^{2}n \log T}{\epsilon}}$ regret bound.
\end{proof}
The proof of \Cref{thm:instance_independent_upper_bound} follows by discussing the relationship between $\Delta_{i} + \epsilon$ and $k\sqrt{\frac{\alpha n \log T}{T}}$.
\begin{proof}
  Recall that $\delta_{ik} = P(a_{i}|s^{*}) - P(a_{k}|s^{*})$, and $\Delta_{l} = \sum_{i=l}^{k}\delta_{li}$. Let $m$ denote the largest $i \in [0, k]$ such that $\Delta_{i} + \epsilon \ge 10 k \sqrt{\frac{\alpha n \log T}{T}}$. Further note that a trivial bound for all $t_{l} - t'_{l}$ is $T$. Combining \Cref{lemma:suboptimal_set_upper_bound,lemma:regret_decomposition}, we have
  \begin{align*}
    R(T) \le 2\sqrt{\alpha k n T \log T} + \frac{50 \alpha k^{3} n \log T}{\Delta_{m} + \epsilon} + \sum_{i=m+1}^{k}\delta_{ik}T
  .\end{align*}
  By definition of $\Delta_{m+1}$, we have $\delta_{ik} \le \Delta_{m+1}, \forall i \ge m+1$. Therefore, we have
  \begin{align*}
    R(t) \le 2\sqrt{\alpha kn T \log T} + \frac{50 \alpha k^{3} n \log T}{\Delta_{m} + \epsilon} + (k - m)\Delta_{m + 1} T.
  \end{align*}
  With $\Delta_{m} + \epsilon \ge 10 k \sqrt{\frac{\alpha n \log T}{T}} \ge \Delta_{m + 1} + \epsilon$, we have the desired regret bound.
\end{proof}

\section{Regret Lower Bound}\label{sec:regret_lower_bound}

We present the regret lower bound under \Cref{assumption:weak}. In particular, we distinguish two reward models with 1) $\Mcal1$, that allows at most 1 of the arms in the selected set $s$ to have non-zero reward (this includes the RUM and MNL model); and 2) $\Mcal2$, that allows multiple arms to have non-zero reward (this includes the independently generated reward). Both $\Mcal1$, $\Mcal2$ are covered by \Cref{assumption:weak}, but the lower bounds differ by a factor of $k$.

\begin{theorem}[Regret Lower Bound]\label{thm:regret_lower_bound}
  For any online learning algorithm, there exists an environment instance with reward model $\Mcal1$ and satisfies \Cref{assumption:weak}, such that the algorithm induces a regret of $R(T) = \Omega\rbr{\frac{n \log T}{k\epsilon}}$. There exists another environment instance with reward model $\Mcal2$ and satisfies \Cref{assumption:weak}, such that the algorithm induces a regret of $R(T) = \Omega\rbr{\frac{n \log T}{\epsilon}}$.
\end{theorem}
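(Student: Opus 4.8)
The plan is to prove both lower bounds with a single change-of-measure template, the difference between $\Mcal1$ and $\Mcal2$ entering only through one per-round divergence computation. For a fixed model I build a base instance $\nu$ together with a family of $n-k$ confusing alternatives $\{\nu_j\}_{j>k}$. In $\nu$ the optimal set is $s^{*}=\{a_1,\dots,a_k\}$: every "good" arm $a_i\in s^{*}$ has parameter $\theta$ and every "bad" arm $a_j\notin s^{*}$ has parameter $\theta-\delta$. For $\Mcal2$ I take set-independent Bernoulli means $\theta=\tfrac12$ and $\theta-\delta=\tfrac12-\delta$; for $\Mcal1$ I take a set-independent "choose at most one arm" model, in which $a_i$ is the chosen arm with probability $q_i$ (and no arm is chosen otherwise), with $q_i=p$ for good arms, $q_i=p-\delta$ for bad arms, and $p=\tfrac1{2k}$. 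Because both constructions are set-independent we have $P(a|s)=P(a|s^{*})$ for every common arm, so \Cref{assumption:weak} holds with equality; and since $\sum_{i\in s}q_i\le\tfrac12<1$ for $\Mcal1$ the single-choice model is well defined. In either model the expected reward of a set containing $m$ bad arms is exactly $\EE[Q(s^{*})]-m\delta$, so the minimum gap is $\epsilon=\delta$ and, crucially, the per-round regret equals $\delta$ times the number of bad arms in the played set, giving the exact decomposition $R(T)=\delta\sum_{j>k}\EE_\nu[N_j(T)]$.

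The alternative $\nu_j$ is identical to $\nu$ except that arm $a_j$'s parameter is raised to $\theta+\delta$ (Bernoulli mean $\tfrac12+\delta$, respectively choice probability $p+\delta$), which makes $a_j$ strictly the best arm, so under $\nu_j$ every optimal set must contain $a_j$ and drop a good arm; $\nu_j$ is again set-independent and hence also satisfies \Cref{assumption:weak}. Since $\nu$ and $\nu_j$ differ only in the distribution attached to $a_j$, their observation histories differ only in rounds that play a set containing $a_j$, and the divergence decomposition lemma gives $\mathrm{KL}(\PP_\nu\|\PP_{\nu_j})=\sum_{s\ni a_j}\EE_\nu[N_s(T)]\,\mathrm{KL}_s\le d_j\,\EE_\nu[N_j(T)]$, where $d_j=\max_{s\ni a_j}\mathrm{KL}_s$. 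A standard distinguishing step (Bretagnolle--Huber, using that a uniformly good algorithm cannot incur linear regret on $\nu_j$, where $a_j$ is optimal) then forces $\EE_\nu[N_j(T)]\ge(1-o(1))\log T/d_j$ for each $j$.

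The only place the two models diverge is the value of the per-round divergence $d_j$. For $\Mcal2$ the feedback on $a_j$ is a single Bernoulli and $\mathrm{KL}_s=\mathrm{KL}(\mathrm{Ber}(\tfrac12-\delta)\|\mathrm{Ber}(\tfrac12+\delta))=\Theta(\delta^2)$, independent of the rest of the set. For $\Mcal1$ the feedback is categorical over $s\cup\{\emptyset\}$, and between $\nu$ and $\nu_j$ only the mass on $a_j$ (by $2\delta$) and on the no-choice outcome changes; the dominant term is the $a_j$ term $(p-\delta)\log\frac{p-\delta}{p+\delta}=\Theta(\delta^2/p)=\Theta(k\delta^2)$, while the no-choice term is $\Theta(\delta^2)$ because I arranged $\sum_{i\in s}q_i\le\tfrac12$, so $d_j=\Theta(k\delta^2)$ uniformly over sets containing $a_j$. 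Plugging into $R(T)=\delta\sum_{j>k}\EE_\nu[N_j(T)]\ge\delta(n-k)(1-o(1))\log T/d_j$ gives $\Omega\!\big(\tfrac{n\log T}{\epsilon}\big)$ for $\Mcal2$ and $\Omega\!\big(\tfrac{n\log T}{k\epsilon}\big)$ for $\Mcal1$, since $\epsilon=\delta$.

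The main obstacle is the $\Mcal1$ divergence estimate: I must show that the constraint $\sum_{i\in s}q_i\le1$ forces individual choice probabilities to be $\Theta(1/k)$, so that for a fixed reward gap $\epsilon=\delta$ the per-sample information $d_j\propto\delta^2/p$ is inflated by the factor $k$ --- this inflation is precisely what makes $a_j$ cheaper to rule out and yields the $1/k$ saving in the lower bound, in contrast to $\Mcal2$ where the individual means are $\Theta(1)$. Secondary care is needed to keep $\mathrm{KL}_s$ uniformly bounded across all sets containing $a_j$ (controlled by keeping the no-choice probability bounded away from $0$), and to make the "there exists an instance" claim rigorous, which I would do either by restricting to uniformly-good algorithms or by a finite-horizon minimax argument that selects the worst $\nu_j$ for the given algorithm.
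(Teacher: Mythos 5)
Your proposal is correct and follows essentially the same route as the paper's proof: a set-independent construction that satisfies \Cref{assumption:weak} trivially, a per-arm change-of-measure argument bounding $\EE[N_j(T)]$ from below via the KL divergence accumulated on rounds playing $a_j$, and the observation that the mutual-exclusivity constraint $\sum_{i\in s}q_i\le 1$ in $\Mcal1$ forces individual choice probabilities to be $\Theta(1/k)$, inflating the per-sample divergence to $\Theta(k\epsilon^2)$ and yielding the extra $1/k$ factor. The only differences are cosmetic (a one-sided perturbation of a single arm versus the paper's swap of the distinguished good arm, and Bretagnolle--Huber versus the Karp--Kleinberg distinguishing inequality); if anything, your one-sided perturbation is slightly cleaner since the two environments then differ in exactly one arm's distribution.
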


\begin{proof}
  We defer the detailed proof to \Cref{proof:regret_lower_bound} and highlight the reason for the difference in $k$ here. Intuitively, for two different environments $\Ecal_{1}, \Ecal_{2}$, one need to select the sets that have different reward distribution in $\Ecal_{1}, \Ecal_{2}$ to accumulate enough "information" (KL-Divergence) to distinguish the two environments.

  Now consider two distributions $p, q \in \RR^{k}_{+}$, which are the reward expectations of all arms in set $s$ under environment $\Ecal1$ and $\Ecal2$. Each element of $p, q$ corresponds to one arm in $s$. For simplicity, let $p_{1}$ and $p_{2}$ be the two smallest elements in $p$, and $q$ differs from $p$ as $q_{1} = p_{1} + \epsilon$ and $q_{2} = p_{2} - \epsilon$. One can show that $D_{KL}(p, q) = \frac{\epsilon^{2}}{p_{1}} + \frac{\epsilon^{2}}{p_{2}} + o(\epsilon^{2})$.

  Under feedback model $\Mcal1$, as the rewards are mutually exclusive, we need $\sum_{i=1}^{k}p_{i} \le 1$. It implies that $p_{1}$ and $p_{2}$ are smaller than $\frac{1}{k-1}$. Whereas for feedback model $\Mcal2$, we can set $p_{1} = p_{2} = \frac{1}{2}$. Therefore playing one sub-optimal set in $\Mcal1$ typically brings $k$-times larger "information" than in $\Mcal2$, which means one can distinguish $\Ecal_{1}$ and $\Ecal_{2}$ by selecting ${k}$-times less sets in $\Mcal1$.
  This brings the difference in the regret lower bound.

\end{proof}

The dependency of $n, B, T, \epsilon$ in the lower bound matches the upper bound (\Cref{thm:instance_dependent_upper_bound}). \Cref{alg:UCB} is thus near-optimal for constant set size $k$ for both $\Mcal1$ and $\Mcal2$, under \Cref{assumption:weak}.

There is a gap on $k$ for between \Cref{thm:instance_dependent_upper_bound} and \Cref{thm:regret_lower_bound}. The gap on $k$ also shows up under the stronger MNL assumption \citep{agrawal2019mnl}. There exists several stronger lower bounds in previous work. By allowing the size of set to change (instead of fixing the size to $k$ as ours), the lower bound can be improved to be $k$-independent for $\Mcal1$ \citep{chen2017note}; with a differently defined $\Scal$, a lower bound that linearly scales with $k$ can be obtained for $\Mcal2$ \citep{kveton2015tight}. Those results are not directly comparable with ours for the difference in settings.

We believe our lower bound can potentially be improved, since the arms still have a total order in our environment construction for lower bound analysis, which implies that \Cref{thm:regret_lower_bound} does not fully capture the hardness of our setting (under \Cref{assumption:weak}).

\section{Beyond Binary Reward}\label{sec:extension}

In previous sections, we focus on the setting with $X_{i, s} \in \cbr{0, 1}$. Here we extend the reward distribution to any bounded distribution. With a minor change in \Cref{alg:UCB}, it achieves the same regret bound as in \Cref{thm:instance_dependent_upper_bound,thm:instance_independent_upper_bound}.

\subsection{Extended Problem Setting and Assumption}

We keep all previous settings but the reward distribution the same. For any set $s$, the reward $X_{i, s}$s are now generated from any bounded distribution with $ {X_{i, s}} \in [0, B]$, and the online learner observes all rewards $X_{i, s}$. Correspondingly, we extend the \textit{weak optimal set consistency} assumption.

\begin{assumption}[\textbf{Extended Weak Optimal Set Consistency}]\label{assumption:extended}
  For any sub-optimal set $s$ and any $a$ that is common in $s, s^*$, we assume $\EE\sbr{X_{i, s}} \ge \EE\sbr{X_{i, s^{*}}}$.
\end{assumption}

\subsection{Algorithm and Regret Upper Bound}
For the extended setting, we can simply modify the $\textit{UCB}_{i}$ update of \Cref{alg:UCB} to 
\begin{align*}
  \textit{UCB}_{i}(t + 1) = C_i(t+1)/N_i(t+1)+B \sqrt{\alpha\log T/N_i(t+1)}
.\end{align*}
The new $\textit{UCB}_{i}$ update provides valid upper bound in the extended setting, as the new reward distributions conditioned on the set $s(t)$ are all sub-Gaussian with parameter $B$. As an immediate corollary of \Cref{thm:instance_dependent_upper_bound,thm:instance_independent_upper_bound}, we have
\begin{corollary}\label[corollary]{coro:extension}
  For combinatorial bandits problem with feedback model under \Cref{assumption:extended}, run the modified \Cref{alg:UCB} with parameter $\alpha \ge 2$, we have
   \begin{align*}
     R(T) = O\rbr{\min\rbr{\frac{\alpha k^{2}B^{2} n \log T}{\epsilon}, k^{2}B\sqrt{\alpha n T \log T}}}
   \end{align*}
\end{corollary}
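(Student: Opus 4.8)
The plan is to re-run the entire binary-reward argument behind \Cref{thm:instance_dependent_upper_bound,thm:instance_independent_upper_bound}, substituting $P(a_i|s)$ by $\EE\sbr{X_{i,s}}$ everywhere and inserting a factor $B$ wherever the old confidence half-width $\sqrt{\alpha\log T/N_i(t)}$ appears, since the modified rule of \Cref{coro:extension} uses $B\sqrt{\alpha\log T/N_i(t)}$. The key structural observation is that the binary analysis rests on two separable ingredients: (i) purely combinatorial bookkeeping about the quantity $\rho(t)$ and the top-$k$ selection rule, which never touches the reward distribution; and (ii) a single high-probability concentration statement (the validity-of-UCB result \Cref{coro:validity_of_UCB}) asserting that the running empirical mean $C_i(t)/N_i(t)$ stays within the confidence half-width of the running average of conditional means. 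Only ingredient (ii) depends on the reward model, so only it must be re-established; the combinatorial proof of \Cref{ob:dynamics_UCB}, and hence \Cref{coro:arm-selection}, then carry over unchanged in form, with only the width inside them scaled by $B$.

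For (ii) the crux is that $X_{i,s}\in[0,B]$ makes every per-pull reward bounded, hence sub-Gaussian with parameter proportional to $B$, \emph{uniformly} over which set $s$ was selected. Although the pulls of arm $a_i$ come from a heterogeneous and adaptively chosen sequence of set-conditional distributions, this uniform range lets an Azuma--Hoeffding martingale-difference argument control $C_i(t)/N_i(t)-\frac{1}{N_i(t)}\sum_{\tau}\EE\sbr{X_{i,s(\tau)}}$ by $B\sqrt{\alpha\log T/N_i(t)}$ with high probability. This is exactly the rescaled width, so \Cref{coro:validity_of_UCB} holds verbatim up to the extra factor $B$.

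It then remains to track how $B$ propagates through the quantitative lemmas. In \Cref{coro:sum} the left-hand side becomes $2B\sqrt{\alpha k n t_l\log T}$, so when \Cref{lemma:suboptimal_set_upper_bound} solves the resulting inequality (quadratic in $\sqrt{t_l}$) the squaring turns this $B$ into $B^2$, giving $t_l-t_l'=O\rbr{\alpha l k n B^2\log T/(\Delta_l+\epsilon)^2}$; the leading term of \Cref{lemma:regret_decomposition} likewise becomes $2B\sqrt{\alpha k n(t_k-t_k')\log T}$. Combining exactly as in the proof of \Cref{thm:instance_dependent_upper_bound} yields the gap-dependent bound $O\rbr{\alpha k^2 B^2 n\log T/\epsilon}$. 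For the gap-independent bound I rescale the threshold in the proof of \Cref{thm:instance_independent_upper_bound} from $10k\sqrt{\alpha n\log T/T}$ to $10kB\sqrt{\alpha n\log T/T}$; each surviving term then retains only a single factor $B$ (the $B^2$ in the middle term cancels against the $B$ that the threshold now contributes in the denominator), giving $O\rbr{k^2 B\sqrt{\alpha n T\log T}}$. Taking the minimum of the two is the statement of \Cref{coro:extension}.

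The only genuinely non-mechanical point, and hence the main obstacle, is the heterogeneous concentration in ingredient (ii): one must verify that averaging over set-dependent reward distributions does not inflate the sub-Gaussian parameter beyond the desired order $B$. This is precisely where the uniform range $[0,B]$ is essential, since it yields a per-pull sub-Gaussian constant that is independent of the realized set $s(\tau)$, making the martingale-difference bound uniform and the width scale cleanly with $B$. Everything downstream of this is routine $B$-bookkeeping through lemmas whose structure is already established in the binary case.
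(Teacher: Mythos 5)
Your proposal is correct and follows essentially the same route as the paper: the paper also treats \Cref{coro:extension} as an immediate consequence of \Cref{thm:instance_dependent_upper_bound,thm:instance_independent_upper_bound}, justified by the single observation that rewards bounded in $[0,B]$ remain sub-Gaussian with parameter of order $B$ uniformly over the adaptively chosen sets, so the concentration step (\Cref{lemma:validity_of_UCB}) and hence the confidence width simply rescale by $B$. Your additional bookkeeping of how $B$ propagates ($B^2$ from squaring in the gap-dependent case, a single $B$ after rescaling the threshold in the gap-independent case) is consistent with the stated bound and merely makes explicit what the paper leaves implicit.
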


\section{Experiments}\label{sec:experiments}

\begin{figure}[t]
    \includegraphics[width=\textwidth]{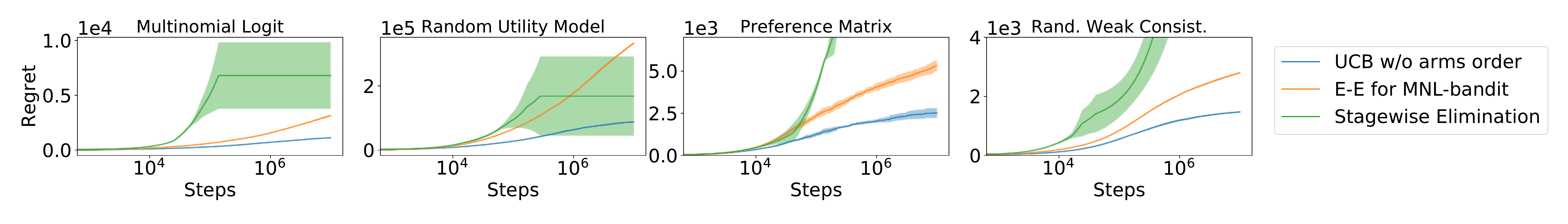}
    \caption{Synthetic experiments with different reward models. The curves are the average of 5 independent runs, with the shaded area representing the standard deviation. The "UCB w/o arms order" corresponds to \Cref{alg:UCB} with $\alpha=2$. "E-E MNL-bandit" refers to the "Exploration-Exploitation algorithm for MNL-Bandit" \citep{agrawal2019mnl}. "Stagewise Elimination" was proposed in \citep{simchowitz2016best}. The parameters are specified as suggested in the original papers.}
    \label{fig:sim}
\end{figure}

We empirically evaluate the performance of \Cref{alg:UCB} on environments with different reward models (see \Cref{fig:sim}), which shows the broad applicability of our proposed algorithm. We summarize the environments below, with details provided in \Cref{APDX:exp_setup}. 

\noindent\textbf{Multinomial Logit}: Each arm $a_i$ has a intrinsic value $v_i$ and the MNL model is used to determine the reward probability. The total number of arms is set to $n=20$ and the set size is set to $k=10$. The number of possible sets is $184756$.
    
\noindent\textbf{Random Utility Model}: Each arm $a_i$ has an intrinsic utility $v_i$. In every step, the random utility $U_i$ of all arms in the set $s$ are independently generated with mean $\mu_i$ and unit variance from Gaussian distribution. The arm with largest random utility $U_i$ receives the reward. The total number of arms is set to $n=20$ and the set size is set to $k=5$. The number of possible sets is $15504$.
    
\noindent\textbf{Preference Matrix}: We set the total number of arms to $n=10$ and the set size to $k=2$, then directly specify a 10-by-10 preference matrix $M$ to determine the probability of an arm receiving reward. In particular, we set the matrix such that there is no total order for the arms.
    
\noindent\textbf{Random Weak Optimal Set Consistency}: We randomly generate the environment that satisfies Assumption \ref{assumption:weak} via rejection sampling. We set the total number of arms to $n=10$ and the set size to $k=5$. Notice that, these randomly generated environments need not to satisfy the assumption of MNL model (or RUM) other than Assumption \ref{assumption:weak}.

Along with \Cref{alg:UCB}, we also take "E-E for MNL-bandit" (Exploration-Exploitation algorithm for MNL, \citep{agrawal2019mnl}) and "Stagewise Elimination" \citep{simchowitz2016best} for comparisons, which are designed for "Multinomial Logit" and "Random Utility Model" environment. The algorithms are tested in the environments listed above, with results shown in \Cref{fig:sim}.

"E-E for MNL-bandit" and "Stagewise Elim" perform relatively good in the environments that they are designed for. Note that in the "Preference Matrix" environment and "Random Weak Optimal Set Consistency" environment, there is no total order among the arms. The "Stagewise Elimination" falsely eliminates an arm that belongs to the optimal set (due to model mis-specification), and therefore suffers from linear regret. \Cref{alg:UCB} performs better in all the testing environments.

\section*{Acknowledgement}
This work is supported in part by NSF grants 1564000 and 1934932.

\bibliography{ref}
\bibliographystyle{abbrvnat}

\newpage
\appendix

\section{Technical Results}
\begin{lemma}[Validity of Upper Confidence Bound]\label[lemma]{lemma:validity_of_UCB}
    Denote $P_i(t) = P(a_i|s(t))$. For the probability measure generated by all sequences of assortments and reward up to time $T$, we have
    \begin{align*}
         P\rbr{\abs{C_i(t)-\sum_{c=1}^tP_i(c)} \ge \sqrt{\alpha N_i(t)\log T} } \le \frac{2}{T^{2\alpha}},\quad \forall t\le T,\forall i\in [n].
    \end{align*}
\end{lemma}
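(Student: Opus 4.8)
The plan is to recognize the left-hand side as a martingale evaluated at a random, history-dependent number of steps, and to control it by first freezing the number of selections and then summing over its possible values. First I would fix the arm $i$ and work with the natural filtration $\Fcal_c = \sigma\rbr{s(1), X_{\cdot, s(1)}, \ldots, s(c), X_{\cdot, s(c)}}$ generated by the first $c$ rounds. The key structural fact is that \Cref{alg:UCB} chooses $s(c)$ purely from the UCB values available at the end of round $c-1$, so $s(c)$ is $\Fcal_{c-1}$-measurable. Hence, writing $D_c = \one[a_i \in s(c)]\rbr{X_{i, s(c)} - P(a_i \mid s(c))}$, we get $\EE\sbr{D_c \mid \Fcal_{c-1}} = 0$, so $\cbr{D_c}$ is a martingale difference sequence and $C_i(t) - \sum_{c=1}^{t} P_i(c) = \sum_{c=1}^{t} D_c$. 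Conditioned on $\Fcal_{c-1}$, each $D_c$ is either identically $0$ (when $a_i \notin s(c)$) or a centered Bernoulli supported on an interval of length $1$; by Hoeffding's lemma it is sub-Gaussian with variance proxy $1/4$.

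The main obstacle is that the deviation must be normalized by $N_i(t)$, the number of times $a_i$ has been selected, which is itself random and correlated with the rewards: a direct length-$t$ Azuma bound would only give the looser radius $\sqrt{\alpha t \log T}$. To obtain the $N_i(t)$-dependent radius I would reindex the nonzero increments by selection order. Let $\tau_1 < \tau_2 < \cdots$ be the (random) rounds in which $a_i$ is selected, let $S_m = \sum_{j=1}^{m} D_{\tau_j}$ be the partial sum of the first $m$ of them, and note that $S_m$ is again a martingale whose increments are conditionally sub-Gaussian with parameter $1/4$. On the event $\cbr{N_i(t) = m}$ we have exactly $C_i(t) - \sum_{c=1}^{t} P_i(c) = S_m$ and $\sqrt{\alpha N_i(t)\log T} = \sqrt{\alpha m \log T}$, so the target event decomposes as $\bigcup_{m}\cbr{\abs{S_m} \ge \sqrt{\alpha m \log T},\ N_i(t) = m}$.

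For each fixed $m$ I would apply the Azuma--Hoeffding inequality to the fixed-length martingale $S_m$, whose variance-proxy sum equals $m/4$; this yields $\PP\rbr{\abs{S_m} \ge \sqrt{\alpha m \log T}} \le 2\exp\rbr{-\frac{\alpha m \log T}{2\,(m/4)}} = 2\exp\rbr{-2\alpha \log T} = 2\,T^{-2\alpha}$, which is precisely the stated per-count tail. Dropping the intersection with $\cbr{N_i(t) = m}$ (which can only shrink the probability) and summing over the at most $t$ admissible values of $m$ bounds the full event, and since the argument holds for every fixed $t \le T$ and arm $i$ the uniform claim follows. The only delicate point is thus the randomness of $N_i(t)$; the harmless enumeration factor it introduces is dominated either by a slightly larger $\alpha$ or by the subsequent union bound over $t$ and $i$ used wherever this lemma is invoked (e.g.\ in the proof of \Cref{coro:sum}).
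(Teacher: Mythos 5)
Your proposal follows the same core route as the paper's proof---write $C_i(t)-\sum_{c=1}^t P_i(c)$ as a martingale with increments $\one[a_i\in s(c)]\rbr{X_{i,s(c)}-P(a_i|s(c))}$ that vanish when the arm is not selected and are bounded in an interval of length $1$ otherwise, then apply Azuma--Hoeffding---but you handle the one genuinely delicate point differently. The paper simply invokes Azuma with the random quantity $N_i(t)$ appearing in the exponent, i.e.\ $\PP(D_i(t)\ge d)\le \exp(-2d^2/N_i(t))$, which as written conflates a fixed-length tail bound with a self-normalized one; your peeling over the events $\cbr{N_i(t)=m}$ and application of Azuma to the fixed-length reindexed martingale $S_m$ is the rigorous way to justify that step. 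The price is the enumeration over $m\le t$, so your argument actually delivers $2t/T^{2\alpha}$ rather than the stated $2/T^{2\alpha}$; you correctly flag this and it is indeed harmless here, since every downstream use (\Cref{coro:validity_of_UCB}, \Cref{coro:sum}) only needs the failure probability to remain polynomially small after a further union bound over $t\le T$ and $i\in[n]$, and $\alpha\ge 2$ leaves ample slack. If one insisted on recovering the exact constant without inflating $\alpha$, the standard alternatives are a method-of-mixtures/self-normalized inequality uniform in $N_i(t)$, or simply restating the lemma with the extra factor of $T$; none of this affects the regret theorems. Two minor points worth making explicit in a final write-up: $S_m$ is a martingale in $m$ with respect to $\Gcal_m=\Fcal_{\tau_m}$ by optional sampling (with the convention $D_{\tau_j}=0$ when $a_i$ is selected fewer than $j$ times), and the convention $P(a_i|s(c))=0$ for $a_i\notin s(c)$ is what makes your increment formula agree with the lemma's $\sum_{c=1}^t P_i(c)$.
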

\begin{proof}
	Consider the quantity
	\begin{align*}
		D_i(t) = C_i(t) - \sum_{c=1}^tP_i(c) 
	\end{align*}
	It is not hard to see that $D_i(0)$ to $D_i(T)$ is a martingale. By Azuma's inequality, we have
	\begin{align*}
		P(D_i(t)\ge d) \le \exp(-2d^2/N_i(t)) \quad P(D_i(t)\le -d) \le \exp(-2d^2/N_i(t))
	\end{align*}
	This comes from the fact that at each time step, if $i$ is selected, the corresponding difference is bounded by 1.
	Equivalently, we have
	\begin{align*}
		P(D_i(t)\ge \sqrt{\alpha N_i(t)\log T }) \le \rbr{\frac{1}{T}}^{2\alpha} \quad P(D_i(t)\le -\sqrt{\alpha N_i(t)\log T}) \le \rbr{\frac{1}{T}}^{2\alpha}
	\end{align*}
	Therefore, we conclude that
	\begin{align*}
		\forall t\le T,\forall i\in [n]\quad P\rbr{\abs{C_i(t)-\sum_{c=1}^tP_i(c)} \ge \sqrt{\alpha N_i(t)\log T} } \le \frac{2}{T^{2\alpha}}
	\end{align*}
\end{proof}

\begin{corollary}[Corollary of \Cref{lemma:validity_of_UCB}]\label[corollary]{coro:validity_of_UCB}
  For all time step $t \in [T]$, and all arm $a_{i} \in \Acal$, we have
  \begin{align*}
    2\sqrt{\alpha N_{i}(t)log T } \ge N_{i}(t) \textit{UCB}_{i}(t) - \sum_{c=1}^{t}P(a_{i}|s(c))
  .\end{align*}
\end{corollary}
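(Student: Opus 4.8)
The plan is to unwind the definition of $\textit{UCB}_i(t)$ and then apply just one side of the concentration bound from \Cref{lemma:validity_of_UCB}. First I would multiply the UCB expression by $N_i(t)$ to get a clean closed form: since $\textit{UCB}_i(t) = C_i(t)/N_i(t) + \sqrt{\alpha \log T / N_i(t)}$, multiplying through yields
$$ N_i(t)\,\textit{UCB}_i(t) = C_i(t) + N_i(t)\sqrt{\frac{\alpha\log T}{N_i(t)}} = C_i(t) + \sqrt{\alpha N_i(t)\log T}. $$
This isolates $C_i(t)$ together with the exploration bonus and reduces the claimed inequality to a statement purely about the deviation $C_i(t) - \sum_{c=1}^t P(a_i|s(c))$.

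Next I would invoke only the upper-tail half of \Cref{lemma:validity_of_UCB}. On the event where the two-sided concentration inequality holds, we have in particular $C_i(t) - \sum_{c=1}^t P_i(c) \le \sqrt{\alpha N_i(t)\log T}$. Substituting the closed form from the previous step then gives
$$ N_i(t)\,\textit{UCB}_i(t) - \sum_{c=1}^t P(a_i|s(c)) = \sqrt{\alpha N_i(t)\log T} + \Big(C_i(t) - \sum_{c=1}^t P_i(c)\Big) \le 2\sqrt{\alpha N_i(t)\log T}, $$
which is exactly the claimed bound.

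The only genuine subtlety --- and the part I would be careful about --- is the quantifier ``for all $t\in[T]$ and all $a_i\in\Acal$''. \Cref{lemma:validity_of_UCB} controls each fixed pair $(t,i)$ with failure probability at most $2/T^{2\alpha}$, so I would take a union bound over the at most $nT$ such pairs to conclude that the good event on which every one of these inequalities holds simultaneously has probability at least $1 - 2nT/T^{2\alpha} = 1 - 2n/T^{2\alpha-1}$, which is $1 - o(1)$ for $\alpha \ge 2$. On this event the deterministic-looking inequality stated in the corollary holds for all $t$ and $i$ at once, which is exactly the ``with high probability'' form in which it is invoked in the proof of \Cref{coro:sum}. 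I expect no real obstacle here: the statement is essentially algebraic, and the one delicate choice is to keep only the one-sided direction (where $C_i(t)$ exceeds its conditional-mean sum) of the concentration inequality rather than its absolute-value version.
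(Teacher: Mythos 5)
Your proposal is correct and matches the paper's intended derivation: the paper states this corollary without an explicit proof, and the only natural route is exactly the one you take --- multiply the UCB definition by $N_i(t)$ to get $N_i(t)\,\textit{UCB}_i(t) = C_i(t) + \sqrt{\alpha N_i(t)\log T}$, then apply the upper-tail half of \Cref{lemma:validity_of_UCB} and a union bound over the $nT$ pairs. Your observation that the corollary is really a high-probability statement (despite its deterministic phrasing) is also consistent with how it is invoked in the proof of \Cref{coro:sum}.
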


\begin{lemma}\label[lemma]{lemma:delta-times-t}
  Recall that we assumed $a_1, \cdots, a_k$ all belong to $s^*$, with $P_i^* = P(a_i|s^*)$ for $i \in [k]$, and $P_1^* > P_2^* > \cdots > P_k^*$. Recall $\delta_{ij} = P_i^* - P_j^*$ and $\Delta_l = \sum_{i=l}^k\delta_{li}$. Let $t_l$ be the last time with $\rho(t) \ge P_l^*$, and $t_l'$ be the number of times that the optimal set $s^*$ is played. For any $l \le k$, we have
  \begin{align*}
    \sum_{i=1}^{l-1}\delta_{il}(t_{i} - t_{i}') \le \frac{30\alpha l k n \log T}{\Delta_{l - 1} + \epsilon}
  .\end{align*}
\end{lemma}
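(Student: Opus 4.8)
The plan is to control the weighted sum $\sum_{i=1}^{l-1}\delta_{il}(t_i-t_i')$ by feeding the per-threshold bound of \Cref{lemma:suboptimal_set_upper_bound} into it, but arranging the summation so that no spurious extra factor of $l$ appears. First I would record two structural facts about the optimal gaps. (i) Each $\Delta_i$ is non-increasing in $i$: writing $\Delta_i=\sum_{j=i}^k(P_i^*-P_j^*)$ one checks $\Delta_i-\Delta_{i+1}=(k-i)(P_i^*-P_{i+1}^*)\ge 0$, so $\Delta_i\ge\Delta_{l-1}$ for every $i\le l-1$. (ii) Since $i<l\le k$, the nonnegative term $\delta_{il}$ is one of the summands of $\Delta_i=\sum_{j=i}^k\delta_{ij}$, hence $\delta_{il}\le\Delta_i$. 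These let me trade the factor $\delta_{il}$ against one power of $\Delta_i+\epsilon$ in the denominator produced by \Cref{lemma:suboptimal_set_upper_bound}.

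Next I would split the index range according to the case distinction in \Cref{lemma:suboptimal_set_upper_bound}. For indices with $\Delta_i\ge\epsilon/10$ I use $t_i-t_i'\le 40\alpha i k n\log T/(\Delta_i+\epsilon)^2$ together with fact (ii) to get $\delta_{il}(t_i-t_i')\le 40\alpha i k n\log T/(\Delta_i+\epsilon)$; for indices with $\Delta_i<\epsilon/10$ I use the $\epsilon^{-2}$ branch together with $\delta_{il}\le\Delta_i<\epsilon/10$. Summing the resulting terms and invoking the monotonicity in (i) to replace every denominator by $\Delta_{l-1}+\epsilon$ gives a clean bound; the remaining task is to show the accumulated numerator is $O(lkn\log T)$ rather than $O(l^2kn\log T)$, which is where the monotonicity of $t_i-t_i'$ in $i$ (the thresholds satisfy $t_1\le\cdots\le t_{l-1}$, and the counts of sub-optimal plays before them are nested) is exploited via summation by parts against the decreasing weights $\delta_{il}$.

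The main obstacle is precisely this last point. A term-by-term substitution of \Cref{lemma:suboptimal_set_upper_bound} — bounding each $i$ by $l$ and each denominator by $\Delta_{l-1}+\epsilon$ independently — overcounts, because the quantities $t_i-t_i'$ for different thresholds all draw on the same pool of sub-optimal plays; this is exactly the overcount that turns the desired $O(k^2)$ into the $O(k^3)$ bound mentioned after \Cref{thm:instance_dependent_upper_bound}. To avoid it I would not bound the $t_i-t_i'$ in isolation but instead return to the sum inequality \eqref{eq:coro-sum} (in the strengthened form underlying \Cref{lemma:suboptimal_set_upper_bound}), write one copy at each threshold $t_i$, weight the $i$-th copy by $\delta_{il}$, and add them: the left-hand sides then aggregate into a single $\sqrt{\,\cdot\,}$ term controlled by Cauchy--Schwarz, while the right-hand sides collapse through the nested structure of the sub-optimal counts, yielding the single factor $l$ and the constant $30$ after collecting terms.
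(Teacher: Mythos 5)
Your plan correctly isolates the real difficulty -- a term-by-term substitution of \Cref{lemma:suboptimal_set_upper_bound} with each denominator relaxed to $\Delta_{l-1}+\epsilon$ yields $O(l^2kn\log T/(\Delta_{l-1}+\epsilon))$, one factor of $l$ too many -- but the mechanism you propose to recover that factor is not the one that works, and you do not carry it out. Your step (ii), trading $\delta_{il}\le\Delta_i$ against one power of $\Delta_i+\epsilon$, is precisely what destroys the structure you need: after that cancellation every summand is bounded by the same quantity $\frac{40\alpha lkn\log T}{\Delta_{l-1}+\epsilon}$ and no rearrangement of the sum can undo the loss. The paper instead keeps the weight $\frac{\delta_{il}}{(\Delta_i+\epsilon)^2}$ intact and uses the purely deterministic fact (\Cref{lemma:bound_sigma}) that $\sum_{i=1}^{l-1}\sigma_{il}\le 2$ with $\sigma_{il}=\frac{4\delta_{il}(\Delta_l+\epsilon)}{(\Delta_i+\epsilon)^2}$, i.e.\ the weighted sum telescopes against the gap sequence to an absolute constant. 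The single factor of $l$ in the numerator then comes for free from the $i\le l$ already present in \Cref{lemma:suboptimal_set_upper_bound}; no information about how the counts $t_i-t_i'$ overlap is needed.

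Your substitute mechanism -- summation by parts against the nested sub-optimal-play counts, or re-aggregating weighted copies of the sum inequality at each threshold -- does not close the gap as described. The nestedness $t_1-t_1'\le\cdots\le t_{l-1}-t_{l-1}'$ by itself is insufficient: bounding every term by the largest count gives $(t_{l-1}-t_{l-1}')\sum_{i<l}\delta_{il}$, and $\sum_{i<l}\delta_{il}$ (gaps \emph{above} $P_l^*$) is not controlled by $\Delta_{l-1}+\epsilon$ (gaps \emph{below} $P_{l-1}^*$); taking $P_1^*=1$ and $P_2^*=\cdots=P_k^*=0$ with $l=k$ makes the former $1$ and the latter $\epsilon$. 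Likewise, weighting the strengthened sum inequality (\Cref{lemma:strong-sum}) by $\delta_{il}$ and adding produces a double sum $\sum_i\delta_{il}\sum_{j<i}\delta_{ji}(t_j-t_j')$ that does not ``collapse'' without exactly the kind of recursive bookkeeping the paper performs through the function $f(i,j)$ of \Cref{lemma:f} and the constant-sum property of the $\sigma_{ij}$. As written, the proposal stops at the point where the actual work begins.
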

\begin{proof}
  Denote $l'$ to be the largest $i$ with $\Delta_{i} \ge \epsilon / 10$. Using Lemma \ref{lemma:suboptimal_set_upper_bound}, for $l \le l'$, we have
  \begin{align*}
      \sum_{i=1}^{l-1}\delta_{il}(t_{i} - t_{i}') \le \frac{10\alpha l k n \log T}{\Delta_{l} + \epsilon}\cdot\sum_{i=1}^{l-1}\frac{4\delta_{il}\rbr{\Delta_l + \epsilon}}{\rbr{\Delta_i + \epsilon}^2} \le \frac{20\alpha l k n \log T}{\Delta_l + \epsilon},
  \end{align*}
  where the last inequality follows from Lemma \ref{lemma:bound_sigma}.
  For $l > l'$, we have
  \begin{align*}
       \sum_{i=1}^{l-1}\delta_{il}(t_{i} - t_{i}') & \le \frac{10\alpha l k n \log T}{\Delta_{l} + \epsilon}\cdot\rbr{\sum_{i=1}^{l'}\frac{4\delta_{il}\rbr{\Delta_l + \epsilon}}{\rbr{\Delta_i + \epsilon}^2}  + \sum_{i = l'+1}^{l-1}\frac{4 \delta_{il}\rbr{\Delta_l + \epsilon}}{\epsilon^2}}\\
       & \le \frac{15\alpha l k n \log T}{\Delta_l + \epsilon}\cdot\rbr{\sum_{i=1}^{l'}\frac{4\delta_{il}\rbr{\Delta_l + \epsilon}}{\rbr{\Delta_i + \epsilon}^2}  + \sum_{i = l'+1}^{l-1}\frac{4 \delta_{il}\rbr{\Delta_l + \epsilon}}{\rbr{\Delta_i + \epsilon}^2}}\\
       & \le \frac{30\alpha l k n \log T}{\Delta_i + \epsilon}.
  \end{align*}
  The second inequality follows from $\Delta_l \le \epsilon / 10$ for $l > l'$, and the last inequality follows from Lemma \ref{lemma:bound_sigma}.
\end{proof}

\section{Proof for \Cref{sec:algorithm_and_regret}}

\subsection{Supporting Lemmas}

\begin{lemma}[Stronger version of \Cref{coro:sum}]\label[lemma]{lemma:strong-sum}
  For simplicity, denote $P_{1}^{*} = P(a_{1}|s^{*}), \cdots P_{k}^{*} = P(a_{k}|s^{*})$, and $P(a_{i}|s(t)) = P_{i}(t)$.
  Let $\delta_{ij} = P(a_i | s^*) - P(a_j|s^*)$. For any $t\in [T]$ and any $l \le k$, recall that $t_{l}$ is the last time step with $\rho(t_{l} \ge P_{l}^{*})$, we have
  \begin{align*}
  \sqrt{4\alpha kn \ln T \rbr{t_{l} - \frac{l}{k}t_{l}'}} \ge \sum_{i=1}^{l}P_{i}^{*}t_{l} + (k-l)P_{l}^{*}t_{l} - \sum_{i=1}^{n}\sum_{c=1}^{t_{l}}P_{i}(c) - \sum_{i=1}^{l-1}\delta_{il}(t_{i} - t_{i}') - nP_{l}^{*}
  .\end{align*}
\end{lemma}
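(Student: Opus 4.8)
The plan is to rerun the Cauchy--Schwarz-plus-validity argument of \Cref{coro:sum}, but to split the arms into the $l$ ``already locked-in'' arms $a_1,\dots,a_l\in s^*$ and the remaining $n-l$ arms, since the former are exactly the arms whose expectations $P_i^*$ must be surfaced on the right-hand side. Concretely, I would apply the validity bound \Cref{coro:validity_of_UCB}, $N_i(t_l)\textit{UCB}_i(t_l)-\sum_{c=1}^{t_l}P_i(c)\le 2\sqrt{\alpha N_i(t_l)\log T}$, only to the tail arms $i>l$ (this is the only step that needs the high-probability event), and sum it over $i=l+1,\dots,n$. The top arms then enter the estimate solely through their accumulated reward $\sum_{c=1}^{t_l}P_i(c)$ and their counts $N_i(t_l)$, and never through their UCBs.

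For the left-hand side I would sharpen the Cauchy--Schwarz step by summing over the tail only: $\sum_{i>l}2\sqrt{\alpha N_i(t_l)\log T}\le 2\sqrt{\alpha(n-l)\log T\sum_{i>l}N_i(t_l)}$. The key combinatorial observation is that every top arm $a_j$ ($j\le l$) belongs to $s^*$, hence is selected on each of the $t_l'$ rounds where $s^*$ is played, so $\sum_{j\le l}N_j(t_l)\ge l\,t_l'$; together with $\sum_{i=1}^n N_i(t_l)=k t_l$ this gives $\sum_{i>l}N_i(t_l)\le k t_l-l\,t_l'$. Plugging in and using $n-l\le n$ yields exactly $\sqrt{4\alpha kn\log T\rbr{t_l-\tfrac{l}{k}t_l'}}$, the claimed left-hand side. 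It therefore remains to show that $\sum_{i>l}N_i(t_l)\textit{UCB}_i(t_l)-\sum_{i>l}\sum_{c}P_i(c)$ dominates the claimed right-hand side, which after cancelling the common $\sum_{i>l}\sum_c P_i(c)$ amounts to lower bounding $\sum_{i>l}N_i(t_l)\textit{UCB}_i(t_l)+\sum_{j\le l}\sum_c P_j(c)$.

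For the tail UCBs I would use \Cref{ob:dynamics_UCB} together with $\rho(t_l)\ge P_l^*\ge0$ and $N_i(t_l)\ge1$ (guaranteed once every arm has been sampled, which the infinite initial UCB forces) to write $N_i(t_l)\textit{UCB}_i(t_l)\ge\rho(t_l)\rbr{N_i(t_l)-1}\ge P_l^*\rbr{N_i(t_l)-1}$, giving $\sum_{i>l}N_i(t_l)\textit{UCB}_i(t_l)\ge P_l^*\sum_{i>l}N_i(t_l)-nP_l^*$ (this is exactly where the $-nP_l^*$ slack originates). For the top arms I would invoke \Cref{assumption:weak}: whenever $a_j\in s^*$ is selected in any $s(c)$ we have $P_j(c)=P(a_j\mid s(c))\ge P(a_j\mid s^*)=P_j^*$, so $\sum_c P_j(c)\ge N_j(t_l)P_j^*$. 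Adding the two groups, the decisive cancellation is that, on substituting $P_j^*=P_l^*+\delta_{jl}$, the combination $\sum_{j\le l}N_j(t_l)P_j^*+P_l^*\sum_{i>l}N_i(t_l)$ collapses through $\sum_{i=1}^n N_i(t_l)=k t_l$ to $k t_l P_l^*+\sum_{j<l}\delta_{jl}N_j(t_l)$.

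Putting everything together, the target reduces to the single inequality $\sum_{j<l}\delta_{jl}N_j(t_l)\ge\sum_{j<l}\delta_{jl}\rbr{t_l-t_j+t_j'}$, which holds term by term because $\delta_{jl}\ge0$ and $N_j(t_l)\ge t_l-(t_j-t_j')$; the latter follows from \Cref{coro:arm-selection}, since $a_j$ is selected at every round after $t_j$ (contributing $t_l-t_j$) and at every $s^*$-round among the first $t_j$ (contributing $t_j'$). The main obstacle is getting the bookkeeping order right in this final comparison: if one substitutes $N_j(t_l)\ge t_l-(t_j-t_j')$ too early --- for instance inside $\sum_{j\le l}N_j(t_l)P_j^*$ before the telescoping --- then the $P_l^*$-weighted residuals no longer cancel and the bound is lost. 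Performing the telescoping against the exact counts $N_j(t_l)$ first, and only invoking the count lower bound at the very end where it is paired with the nonnegative weight $\delta_{jl}$, is precisely what makes every cross term cancel with sign to spare.
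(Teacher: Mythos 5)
Your proposal is correct and follows essentially the same route as the paper's proof: apply \Cref{coro:validity_of_UCB} together with \Cref{ob:dynamics_UCB} only to the tail arms $i>l$, lower-bound the top arms' accumulated rewards via \Cref{assumption:weak}, convert counts to $t_l$ using $t_l - N_j(t_l)\le t_j - t_j'$ from \Cref{coro:arm-selection}, and finish with Cauchy--Schwarz over the tail using $\sum_{i>l}N_i(t_l)\le kt_l - l\,t_l'$. Your write-up merely makes explicit the telescoping $\sum_{j\le l}N_jP_j^* + P_l^*\sum_{i>l}N_i = kt_lP_l^* + \sum_{j<l}\delta_{jl}N_j$ that the paper leaves implicit.
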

\begin{proof}
  By \Cref{coro:validity_of_UCB} and \Cref{ob:dynamics_UCB}, at time $t_{l}$, we have
  \begin{align*}
    2\sqrt{\alpha N_{i}(t_{l})\log T} \ge N_{i}(t)P_{l}^{*} - \sum_{t=1}^{t_{l}}P(a_{i}|s(t)) - P_{l}^{*}
  \end{align*}
  Summing up for $i \ge l + 1$, we have
  \begin{align*}
    2\sum_{i = l + 1}^{n} \sqrt{\alpha \ln T N_{i}(t_{l})} & \ge \sum_{i = 1}^{l}P_{i}^{*} N_{i}(t_{l}) + \sum_{i > l}^{n}P_{l}^{*}N_{i}(t_{1}) - \sum_{i = 1}^{n}\sum_{c = 1}^{t_{l}}P_{i}(c) - n P_{l}^{*} \\
                                                       & \ge \sum_{i = 1}^{l}P_{i}^{*} t_{l} + (k - l) P_{l}^{*} t_{l} - \sum_{i=1}^{n}\sum_{c=1}^{t_{l}}P_{i}(c) - \sum_{i = 1}^{l-1}\delta_{il}\rbr{t_{i} - t'_{i}} - nP_{l}^{*}
  .\end{align*}
  The first inequality follows from $P_{i}(c) \ge P_{i}^{*}$ for any $c$ and $i \le l$, by \Cref{assumption:weak}. The second inequality follows from $t_{l} - N_{i}(t_{l}) \le t_{i} - t_{i}'$, by \Cref{coro:arm-selection}. The desired inequality follows by Cauchy-Schwart inequality
  \begin{align*}
    \sqrt{4 \alpha k n \log T \rbr{k_{l} - \frac{l}{k}t_{l}'}} \ge 2\sum_{i=l+1}^{n}\sqrt{\alpha \log T N_{i}(t_{l})}
  .\end{align*}
\end{proof}

\begin{lemma}\label[lemma]{lemma:bound_sigma}
  Recall that we assumed $a_1, \cdots, a_k$ all belong to $s^*$, with $P_i^* = P(a_i|s^*)$ for $i \in [k]$, and $P_1^* > P_2^* > \cdots > P_k^*$. Recall $\delta_{ij} = P_i^* - P_j^*$ and $\Delta_l = \sum_{i=l}^k\delta_{li}$. Let $\sigma_{ij} = \frac{4\delta_{ij}(\Delta_j + \epsilon)}{(\Delta_i + \epsilon)^2}$, we have
  \begin{align*}
      \sum_{j=i}^{k} \sigma_{ij} \le 2,~\forall i \le k, \forall \epsilon \ge 0.
  \end{align*}
\end{lemma}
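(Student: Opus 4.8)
The plan is to first clear the common denominator. Since $\sum_{j=i}^{k}\sigma_{ij} = \frac{4}{(\Delta_i+\epsilon)^2}\sum_{j=i}^{k}\delta_{ij}(\Delta_j+\epsilon)$, it suffices to prove $\sum_{j=i}^{k}\delta_{ij}(\Delta_j+\epsilon)\le \tfrac12(\Delta_i+\epsilon)^2$. Writing $x_j := \Delta_j+\epsilon$, I note that $x_j$ is nonnegative and non-increasing in $j$ over $\{i,\dots,k\}$, with $x_i$ the largest term, so the goal becomes $S:=\sum_{j=i}^{k}\delta_{ij}x_j \le \tfrac12 x_i^2$. The case $i=k$ is trivial since $\delta_{kk}=0$, so I assume $i<k$.

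The engine is a double telescoping in terms of the consecutive gaps $g_m := P_m^*-P_{m+1}^*\ge 0$. Two identities drive everything: (i) $\delta_{ij}=\sum_{m=i}^{j-1}g_m$, expressing each $\delta_{ij}$ as a sum of consecutive gaps; and (ii) $x_m-x_{m+1}=\Delta_m-\Delta_{m+1}=(k-m)\,g_m$, which I would verify by a one-line expansion of $\Delta_m=\sum_{s=m}^{k}(P_m^*-P_s^*)$. Substituting (i) into $S$ and swapping the order of summation (Fubini) collects the coefficient of each $g_m$, giving $S=\sum_{m=i}^{k-1}g_m\sum_{j=m+1}^{k}x_j$.

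From here I bound the inner sum by monotonicity, $\sum_{j=m+1}^{k}x_j\le (k-m)\,x_{m+1}$, and substitute $g_m=(x_m-x_{m+1})/(k-m)$ from identity (ii). The crucial cancellation is that the weight $(k-m)$ produced by the inner-sum bound is exactly the factor in the denominator of $g_m$, so the two cancel and leave $S\le \sum_{m=i}^{k-1}(x_m-x_{m+1})\,x_{m+1}$. Each summand satisfies $(x_m-x_{m+1})x_{m+1}\le \tfrac12(x_m^2-x_{m+1}^2)$ because $x_m+x_{m+1}\ge 2x_{m+1}$, so the sum telescopes to $\tfrac12(x_i^2-x_k^2)\le \tfrac12 x_i^2=\tfrac12(\Delta_i+\epsilon)^2$, which is exactly the claim.

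The step I expect to be the main obstacle — or rather the one place where a naive approach fails — is obtaining the correct coefficient after the summation swap. The tempting shortcut is to bound $\delta_{ij}\le \Delta_i-\Delta_j=x_i-x_j$ termwise and then estimate $\sum_j (x_i-x_j)x_j$; but this is far too lossy, since $\delta_{ij}$ can be smaller than $\Delta_i-\Delta_j$ by a factor of order $k$, and the resulting sum can exceed $\tfrac12 x_i^2$ by a multiple growing with $k$. The whole point is that the factor $k-m$ in identity (ii) must be tracked precisely so that it cancels the $k-m$ terms in $\sum_{j=m+1}^{k}x_j$; recognizing this exact cancellation is what pins the bound to the constant $2$ rather than to something scaling with $k$.
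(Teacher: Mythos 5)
Your proof is correct, and it takes a genuinely different route from the paper's. The paper normalizes by $\Delta_i+\epsilon$, sets $x_m=\delta_{im}/(\Delta_i+\epsilon)$, argues $\sum_{m=i}^k x_m\le 1$, and reduces the claim to the elementary inequality $4\sum_j x_j\bigl(1-\sum_{m\le j}x_m\bigr)\le 2$, i.e.\ a right-endpoint Riemann sum for $\int_0^1(1-s)\,ds=\tfrac12$. You instead decompose into consecutive gaps $g_m=P_m^*-P_{m+1}^*$, swap the order of summation, and exploit the exact identity $\Delta_m-\Delta_{m+1}=(k-m)g_m$ so that the factor $k-m$ from the crude bound $\sum_{j>m}x_j\le(k-m)x_{m+1}$ cancels, leaving the telescoping sum $\sum_m(x_m-x_{m+1})x_{m+1}\le\tfrac12(x_i^2-x_k^2)$. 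The two arguments are the same discretized integral in disguise, but yours has a concrete advantage: it is fully self-contained and verifiable line by line, whereas the paper's first displayed ``equality'' rewrites $\Delta_j+\epsilon$ as $\sum_{m=j}^k\delta_{mk}+\epsilon$, which is not an identity under the stated definition $\Delta_j=\sum_{m=j}^k\delta_{jm}$ (the two quantities can differ in either direction), so the paper's derivation requires repair or reinterpretation at that step. Your observation that the termwise bound $\delta_{ij}\le\Delta_i-\Delta_j$ is lossy by a factor of order $k$ is also accurate and correctly identifies why the $(k-m)$ cancellation is the crux.
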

\begin{proof}
  Expanding the summation, we have
  \begin{align*}
      \sum_{j=i}^k \sigma_{ij} & = \sum_{j=i}^k \frac{4\delta_{ij}(\Delta_j + \epsilon)}{\rbr{\Delta_i + \epsilon}^2} = 4\sum_{j=i}^k\frac{\delta_{ij}}{\Delta_i + \epsilon}\rbr{\sum_{m=j}^k \frac{\delta_{mk}}{\Delta_i + \epsilon} + \frac{\epsilon}{\Delta_i + \epsilon}}.
  \end{align*}
  Note that 
  \begin{align*}
      \sum_{m=j}^k \delta_{mk} + \sum_{m=i}^{j}\delta_{im} \le \Delta_i + \epsilon \implies \sum_{m=j}^k \frac{\delta_{mk}}{\Delta_i + \epsilon} \le 1 - \sum_{m=i}^j \frac{\delta_im}{\Delta_i + \epsilon}.
  \end{align*}
  For brevity, let $x_m = \frac{\delta_{im}}{\Delta_i + \epsilon}$, we have $\sum_{m=i}^k x_m \le 1$ and 
  \begin{align*}
      \sum_{j=i}^k\sigma_{ij} \le 4\sum_{j=i}^k x_j (1 - \sum_{m=i}^jx_m) \le 2.
  \end{align*}
  The last inequality holds for any $\sum_{m=i}^k x_m \le 1$.
\end{proof}
\begin{lemma}\label[lemma]{lemma:f}
    For any $1 \le i < j \le k$, define funciton $f(i, j) = 0.4\sigma_{ij} + \sum_{m = i+1}^{j-1}0.4\sigma_{im}f(m, j)$. We have
    \begin{enumerate}
        \item $f(i, j) = 0.4\sigma_{ij} + \sum_{m=i+1}^{j - 1}0.4 f(i, m)\sigma_{mj}$
        \item $f(i, j) \le 1$
    \end{enumerate}
\end{lemma}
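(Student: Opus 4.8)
The plan is to prove both parts by induction on the interval length $j-i$, with the conceptual backbone being that $f(i,j)$ is the total weight of all strictly increasing chains from $i$ to $j$ when each hop $a\to b$ carries weight $0.4\,\sigma_{ab}$. Concretely, I would first establish, by a one-line induction on $j-i$, the chain-sum identity
\[
  f(i,j) \;=\; \sum_{r\ge 1}\ \sum_{i=i_0<i_1<\cdots<i_r=j}\ \prod_{\ell=0}^{r-1} 0.4\,\sigma_{i_\ell i_{\ell+1}}.
\]
The base case $j=i+1$ is the single direct hop $0.4\,\sigma_{i,i+1}$ (the inner sum in the definition is empty), and the defining recursion just groups every chain by its \emph{first} hop $i\to m$, after which the remainder is a chain from $m$ to $j$ whose weight is $f(m,j)$ by the inductive hypothesis (valid since $j-m<j-i$).

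For Part 1, I would observe that the very same chain-sum can be grouped by its \emph{last} hop instead: a chain from $i$ to $j$ is either the direct hop (weight $0.4\,\sigma_{ij}$) or ends with some hop $m\to j$, $i<m<j$, preceded by a chain from $i$ to $m$ of weight $f(i,m)$. Summing these contributions produces exactly $f(i,j)=0.4\,\sigma_{ij}+\sum_{m=i+1}^{j-1}0.4\,f(i,m)\,\sigma_{mj}$. Equivalently, writing $A$ for the strictly upper-triangular matrix with entries $A_{im}=0.4\,\sigma_{im}$, the defining recursion reads $f=A+Af$ and Part 1 reads $f=A+fA$; both are solved by $f=A(I-A)^{-1}=(I-A)^{-1}A$, using that $A$ is nilpotent and hence commutes with $(I-A)^{-1}$. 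This matrix form is the cleanest rigorous route and sidesteps the chain bookkeeping entirely.

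For Part 2, I would run a direct induction on $j-i$ off the \emph{front}-peeling form, i.e.\ the definition. The base case gives $f(i,i+1)=0.4\,\sigma_{i,i+1}\le 0.8$ because each $\sigma_{im}\ge 0$ and $\sum_m\sigma_{im}\le 2$ by \Cref{lemma:bound_sigma}. For the inductive step, every inner term obeys $f(m,j)\le 1$ by the hypothesis (as $j-m<j-i$), hence
\[
  f(i,j)\;\le\;0.4\,\sigma_{ij}+\sum_{m=i+1}^{j-1}0.4\,\sigma_{im}\;=\;0.4\sum_{m=i+1}^{j}\sigma_{im}\;\le\;0.4\sum_{m=i}^{k}\sigma_{im}\;\le\;0.8\;\le\;1,
\]
where nonnegativity of the $\sigma_{im}$ (which holds since $\delta_{ij}>0$ for $i<j$ and $\Delta_i+\epsilon>0$) lets me enlarge the summation range, and \Cref{lemma:bound_sigma} closes the bound.

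Part 2 is essentially routine; the only real work is Part 1, where one must be certain the first-hop and last-hop groupings are both legitimate. I expect the main obstacle to be keeping the chain-sum argument honest rather than hand-wavy — in particular, checking that the inductive hypothesis is applied only to strictly shorter intervals in \emph{both} groupings and that the empty-sum conventions at $j=i+1$ agree. If a fully formal write-up is preferred, I would present Part 1 through the matrix identity $f=A+Af=A+fA$, which removes this ambiguity and makes the equivalence of the two recursions immediate.
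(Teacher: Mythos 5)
Your proof is correct and follows essentially the same route as the paper: Part 1 via the chain-sum representation of $f(i,j)$ (the paper's $\sum_{x\in\Gamma(i,j)}0.4^{|x|}g(x)$) regrouped by first versus last hop, and Part 2 by induction on $j-i$ using $f(m,j)\le 1$ together with $\sum_{m}\sigma_{im}\le 2$ from \Cref{lemma:bound_sigma}. Your matrix identity $f=A+Af=A+fA$ with $A$ nilpotent is a cleaner packaging of the same Part 1 argument, but the underlying decomposition is identical.
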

\begin{proof}
  We first prove the first part. Let $\Pi(i, j)$ be the power set of $\cbr{i, i+1, \cdots, j-1, j}$. Let $\Gamma(i, j) = \cbr{x| x\in \Pi(i,j), i \in x, j \in x}$. Further, for $x \in \Gamma(i, j)$defining
  \begin{align*}
      g(x) = \sigma_{x_1, x_2} \cdot \sigma_{x_2, x_3} \cdots \sigma_{x_{\abs{x} - 1}, x_{\abs{x}}}.
  \end{align*}
  For example, for $x = \cbr{2,3, 5, 7}$, we have $g(x) = \sigma_{23}\cdot\sigma_{35}\cdot\sigma_{57}$. By definition, it can be shown via induction that
  \begin{align*}
      f(i, j) = \sum_{x \in \Gamma(i, j)}0.4^{\abs{x}} g(x),
  \end{align*}
  which is equivalent to the first equation in Lemma \ref{lemma:f}. For the second part of the proof, we prove by induction. It can be easily verified that for any $i, j$ such that $j - i = 1$, we have $f(i, j) = 0.4\sigma_{ij} \le 1$. Now, suppose that the inequality holds for any $i, j$ with $j - i = l - 1$, then for any $j', i'$ with $j' - i' = l$, we have
  \begin{align*}
      f(i, j) \le 0.4 \sigma_{ij} + \sum_{m = i+1}^{j-1}0.4 \sigma_{im} = 0.4 \sum_{m= i+1}^{j}\sigma_{im} \le 0.8
  \end{align*}
  The last inequality follows from Lemma \ref{lemma:bound_sigma}.
\end{proof}

\subsection{{Proof of Lemma \ref{lemma:suboptimal_set_upper_bound}}}\label{proof:suboptimal_set_upper_bound}
\begin{proof}
  Recall that $P_{1}^{*} = \PP(a_{1}|s^{*}), \cdots, P_{k}^{*} = \PP(a_{k}| s^{*})$. 
  Define $\delta_{ij} = P_{i}^{*} - P_{j}^{*}$, $\Delta_{l} = \sum_{i = l}^{k}\delta_{li}$. Define $t_{l}$ to be the last time step with $\rho(t_{l}) \ge P_{l}^{*}$. Denote $t'_{l}$ to be the number of times $s(t) = s^{*}$ for $t \le t_{l}$.

  \textbf{Case I: $\Delta_{l} \ge \frac{\epsilon}{10}$.}

  By Lemma \ref{lemma:strong-sum}, we have
  \begin{align*}
    \sqrt{4\alpha kn \ln T \rbr{t_{l} - \frac{l}{k}t_{l}'}} \ge \sum_{i=1}^{l}P_{i}^{*}t_{l} + (k-l)P_{l}^{*}t_{l} - \sum_{i=1}^{n}\sum_{c=1}^{t_{l}}P_{i}(c) - \sum_{i=1}^{l-1}\delta_{il}(t_{i} - t_{i}') - nP_{l}^{*}
  .\end{align*}
  Note that
  \begin{align*}
    \sum_{i=1}^{l}P_{i}^{*}t_{l} + (k-l)P_{l}^{*} - \sum_{i=1}^{k}P_{i}^{*}  = \sum_{i = l}^{k}\delta_{li} = \Delta_{l}
  .\end{align*}
  By the fact $\sum_{i=1}^{k}P_{i}(t) \le \sum_{i=1}^{k}P_{i}^{*} - \epsilon$ for suboptimal assortmet, we have 
  \begin{align*}
    \sqrt{4\alpha kn \ln T \rbr{t_{l} - \frac{l}{k}t_{l}'}} &\ge \Delta_{l}t_{l} + \epsilon\rbr{t_{l} - t'_{l}} - \sum_{i=1}^{l - 1}\delta_{il}\rbr{t_{i} - t'_{i}} - nP_{l}^{*} \\
     & \ge \rbr{\Delta_{l} + \epsilon}\rbr{t_{l} - \frac{l}{k}t'_{l}} - \frac{\epsilon (k-l) - \Delta_{l}l }{k}t'_{l} - \sum_{i=1}^{l-1}\delta_{il}(t_{i}-t'_{i}) - nP_{l}^{*}
  .\end{align*}
  For $\ln T \ge 5$, with the fact $k \ge \Delta_{l}, 1 \ge P_{l}^{*}, \alpha \ge 1$, we have $4n(\Delta_{l}+\epsilon)P_{l}^{*} \le 0.8\alpha k n \ln T$, we therefore have the following bound for $\sqrt{t_{l} - \frac{l}{k}t'_{l}}$, 
  \begin{align}\label{eq:1}
    \sqrt{t_{l} - \frac{l}{k}t'_{l}}\le \frac{\sqrt{4\alpha k n \ln T}\rbr{1 + \sqrt{1.2 + \frac{\sum_{i=1}^{l-1}4\rbr{\Delta_{l} + \epsilon}\delta_{il}}{4\alpha kn \ln T}\rbr{t_{i} - t'_{i}} + \frac{4\rbr{\Delta_{l} + \epsilon}\frac{\epsilon(k-l) - \Delta_{l}l}{k}}{4\alpha kn \ln T}t'_{l}}}}{2\rbr{\Delta_{l} + \epsilon}}
  .\end{align}
  For simplicity, we write $t_{l}  - t'_{l}$ in the following form
  \begin{align*}
    t_{l} - t'_{l} = c_{l}\frac{4\alpha k n \ln T}{\rbr{\Delta_{l} + \epsilon}^{2} }
  .\end{align*}
  \Cref{eq:1} can be then rewriten as 
  \begin{align*}
    \sqrt{t_{l} - \frac{l}{k}t'_{l}} \le \frac{1}{2}\rbr{1 + \sqrt{1.2 + \sum_{i=1}^{l - 1}\frac{4\delta_{il}\rbr{\Delta_{l} + \epsilon}}{\rbr{\Delta_{i} + \epsilon}^{2}}c_{i} + \frac{4\rbr{\Delta_{l} + \epsilon}\frac{\epsilon(k-l) - \Delta_{l}l}{k}}{4\alpha kn \ln T}t'_{l}}}\frac{\sqrt{4\alpha kn \ln T}}{\Delta_{l} + \epsilon}
  .\end{align*}
  Furhter, define $\sigma_{i, l} = \frac{4\delta_{il}\rbr{\Delta_{l} + \epsilon}}{\rbr{\Delta_{i} + \epsilon}^{2}}$, we have
  \begin{align*}
    \sqrt{t_{l} - \frac{l}{k}t'_{l}} \le \frac{1}{2}\rbr{1 + \sqrt{1.2 + \sum_{i = 1}^{l - 1}\sigma_{il}c_{i} + \frac{4\rbr{\Delta_{l} + \epsilon}\frac{\epsilon(k-l) - \Delta_{l}l}{k}}{4\alpha kn \ln T}t'_{l}}}\frac{\sqrt{4\alpha kn \ln T}}{\Delta_{l}+\epsilon}
  .\end{align*}
  By the fact $(1+a)^{2} \le 1.1 a^{2} + 11$ for any real number $a$, we have
  \begin{align*}
    t_{l} - \frac{l}{k}t'_{l} \le \frac{1}{4}\rbr{11 + 1.32 + 1.1 \sum_{i=1}^{l - 1}\sigma_{il}c_{i}}\frac{4\alpha kn \ln T}{\rbr{\Delta_{l} + \epsilon}^{2}} + 1.1\frac{\epsilon(k-l) - \Delta_{l}l}{k\rbr{\Delta_{l} + \epsilon}}t'_{l}   
  .\end{align*}
    Since $\Delta_{l} \ge \frac{\epsilon}{10}$, we have $\frac{\epsilon}{3}\rbr{k - l} \le \frac{2}{3} \Delta_{l}\rbr{k +\frac{1}{2}l}$, which imples $\epsilon(k -l) - \Delta_{l}l \le \frac{2}{3} (k-l)(\Delta_{l}+\epsilon)$. Therefore we have
  \begin{align*}
    & t_{l} - \frac{l}{k}t'_{l} = \rbr{3.08 + 0.275\sum_{i= 1}^{l - 1}\sigma_{il}c_{i} }\frac{4\alpha kn \ln T}{\rbr{\Delta_{l}+\epsilon}^{2}} + \frac{k-l}{k}t'_{l} \\
    & \implies t_{l} - t'_{l} \le \rbr{3.08 + 0.275\sum_{i= 1}^{l - 1}\sigma_{il}c_{i} }\frac{4\alpha kn \ln T}{\rbr{\Delta_{l}+\epsilon}^{2}}
  .\end{align*}
  Plug in the convention of $c_{l}$, we have
  \begin{align*}
    c_{l} \le 3.08 + 0.275\sum_{i= 1}^{l - 1}\sigma_{il}c_{i}
  .\end{align*}
   With Lemma \ref{lemma:f}, We can use $f(i,j)$ to upper bound $c_{l}$. First define $c'_{l} = \frac{c_{l}}{10}$, which implies that
  \begin{align*}
    c'_{l} \le 0.308 + 0.275 \sum_{i=1}^{l-1}\sigma_{il}c'_{i}
  .\end{align*}
  Next we proceed to show that 
  \begin{align}\label{eq:2}
    c'_{l} & \le 0.308 + \sum_{i = 1}^{l-1}f(i, l)
  .\end{align}
  We prove \Cref{eq:2} by induction. For $l = 1, 2$, we have
  \begin{align*}
    c'_{1} \le 0.308, \quad c'_{2} \le 0.308 + 0.275 \sigma_{12} c'_{1} \le 0.308 + 0.275\sigma_{12} = 0.308 + f(1, 2)
  .\end{align*}
  Suppose \Cref{eq:2} holds for $c'_{l-1}$, then we have
  \begin{align*}
    c'_{l} & \le 0.308 + 0.275\sum_{i=1}^{l - 1}\sigma_{il}c_{i} \le 0.308 + 0.275\sum_{i=1}^{l-1}\sbr{0.308 + \sum_{j=1}^{i-1}f(j, i)}\sigma_{il} \\
          & \le 0.308 + \sum_{i=1}^{l -1}\sbr{0.275\sigma_{il} + 0.275\sum_{j=1}^{i-1}f(j, i)\sigma_{il}} \\
          & \le 0.308 + \sum_{j=1}^{l-1}\sum_{i=j+1}^{l-1}0.275f(j, i)\sigma_{il} + \sum_{i=1}^{l-1}0.275\sigma_{il} \\
          & \le 0.308 + \sum_{j=1}^{l-1}\sbr{0.275\sigma_{jl} + \sum_{i=j+1}^{l-1}0.275f(j, i)\sigma_{il}} \\
          & \le 0.308 + \sum_{i=1}^{l - 1}f(i, l)
  .\end{align*}
  The last inequality follows from the first equation in Lemma \ref{lemma:f}. Combining with the second inequality in Lemma \ref{lemma:f}, we have $c_l \le 10 l$. This completes the proof of the first case in Lemma \ref{lemma:suboptimal_set_upper_bound}.

  \textbf{Case II: $\Delta_{l} < \frac{\epsilon}{10}$.}

  Denote $l'$ to be the lagest $i$ with $\Delta_{i} \ge \epsilon / 10$. By definition, we know $l > l'$. Applying \Cref{lemma:strong-sum} to all arms, we have that
  \begin{align*}
    2\sqrt{\alpha k n \rbr{t_{l} - t'_{l}} \ln T} &\ge \sum_{i=k+1}^{n}P_{l}^{*}N_{i}(t_{l}) + \sum_{i=1}^{k}P_{i}^{*}N_{i}(t_{l}) - \sum_{i=1}^{n}\sum_{c=1}^{t_{l}}P_{i}(c) - nP_{l}^{*} \\
                                                      &\ge \epsilon\rbr{t_{l} - t'_{l}} - \sum_{i=1}^{l-1}\delta_{ik}(t_{i} - t'_{i}) - nP_{l}^{*}
  .\end{align*}
  
  Solving for $t_{l} - t'_{l}$, we have
  \begin{align*}
    \sqrt{t_{l} - t'_{l}} \le \frac{\sqrt{4\alpha kn \ln T}+ \sqrt{4.8\alpha kn \ln T + \sum_{i=1}^{l-1} 4\delta_{il}\epsilon \rbr{t_i - t'_{i}}}}{\epsilon}
  .\end{align*}
  Similar as $c_{l}$, we write $t_{l}$ for $l > l'$ as
  \begin{align*}
    t_{l} - t'_{l} = d_{n}\frac{4\alpha kn \ln T}{\epsilon^{2}}
  .\end{align*}
  Therefore
  \begin{align*}
    \sqrt{t_{l} - t'_{l}} &\le \frac{\sqrt{4\alpha kn \ln T} \rbr{1 + \sqrt{1.2 + \sum_{i=1}^{l'}\frac{4\delta_{il}\epsilon }{\rbr{\Delta_{i} + \epsilon}^{2}}c_{i} + \sum_{i=l'+1}^{l-1}\frac{4\delta_{il}\epsilon}{\epsilon^{2}}d_{i}}} }{\epsilon} \\
                  &\le \frac{\sqrt{4\alpha kn \ln T} \rbr{1 + \sqrt{1.2 + \sum_{i=1}^{l'}\frac{4\delta_{il}\rbr{\Delta_{l} + \epsilon} }{\rbr{\Delta_{i} + \epsilon}^{2}}c_{i} + 1.21\sum_{i=l'+1}^{l-1}\frac{4\delta_{il}\rbr{\Delta_{i} + \epsilon}}{\rbr{\Delta_{i}+ \epsilon}^{2}}d_{i}}} }{\epsilon} 
  .\end{align*}
  The second inequality follows from $\frac{\rbr{\Delta_{i} + \epsilon}^{2}}{\epsilon^{2}} \le 1.21$ as $\Delta_{i} < \frac{\epsilon}{10}$ for all $i > l'$. Simplify the inequality, we have 
  \begin{align*}
    \sqrt{d_{l}} \le \frac{1}{4}\rbr{1 + \sqrt{1.2 + \sum_{i=1}^{l'}\sigma_{il}c_{i} + 1.21\sum_{i=l'+1}^{l-1}\sigma_{il}d_{i} }}
  .\end{align*}
  Again use the fact that $(1+a)^{2} \le 11 + 1.1 a^{2}$, we have
  \begin{align*}
    d_{l} &\le \frac{1}{4}\rbr{11 + 1.32 + 1.1\sum_{i=1}^{l'}\sigma_{il}c_{i} + 1.331\sum_{i=l'+1}^{l-1}\sigma_{il}d_{i}} \\
          &\le 3.08 + 0.275\sum_{i=1}^{l'-1}\sigma_{il}c_{i} + 0.34 \sum_{i=l'+1}^{l-1}\sigma_{i}d_{i}
  .\end{align*}
  Recall that we've defined $f(i, j) = 0.4\sigma_{ij} + \sum_{k = i+1}^{j-1}0.4\sigma_{ik}f(k, j)$. Similar to showing $c_l \le 3.08 + \sum_{i=1}^{l-1}f(i, l)$, we can define $d_l' = d_l / 10$ and have
  \begin{align*}
    d'_{l} & \le 0.308 + 0.275\sum_{i=1}^{l'}\sigma_{il}c'_{i} + 0.34 \sum_{i=l'+1}^{l-1}\sigma_{il}d_i \\
          & \le 0.308 + 0.275\sum_{i=1}^{l'}\sbr{0.308 + \sum_{j=1}^{i-1}f(j, i)}\sigma_{il} + 0.34 \sum_{i=l'+1}^{l-1}\sbr{0.308 + \sum_{j=1}^{i-1}f(j, i)}\sigma_{il}\\
          & \le 0.308 + \sum_{i=1}^{l -1}\sbr{0.34\sigma_{il} + 0.34\sum_{j=1}^{i-1}f(j, i)\sigma_{il}} \\
          & \le 0.308 + \sum_{j=1}^{l-1}\sum_{i=j+1}^{l-1}0.34f(j, i)\sigma_{il} + \sum_{i=1}^{l-1}0.34\sigma_{il} \\
          & \le 0.308 + \sum_{j=1}^{l-1}\sbr{0.34\sigma_{jl} + \sum_{i=j+1}^{l-1}0.34f(j, i)\sigma_{il}} \\
          & \le 0.308 + \sum_{i=1}^{l - 1}f(i, l)
  .\end{align*}
  
  Therefore we have $d_{l} \le 3.08 + 10\sum_{i=1}^{l-1}f(i,l) \le 10 l$, which completes the proof for the second case.
\end{proof}

\subsection{{Proof of \Cref{lemma:regret_decomposition}}}

\begin{proof}
  Note that by \Cref{assumption:weak}, we have $\rho(T) \ge P(a_k|s^*)$ which implies $R(T) \le R(t_k)$. Plug in Lemma \ref{lemma:strong-sum} with $l = k$, we have
  \begin{align*}
        \sqrt{4\alpha kn \ln T \rbr{t_{k} - t_{k}'}} \ge \sum_{i=1}^{k}P(a_i|s^*)t_{k} - \sum_{i=1}^{n}\sum_{c=1}^{t_{k}}P_{i}(c) - \sum_{i=1}^{k-1}\delta_{ik}(t_{i} - t_{i}') - nP(a_k|s^*).
  \end{align*}
  Note that $R(t_k) = \sum_{i=1}^{k}P(a_i|s^*)t_{k} - \sum_{i=1}^{n}\sum_{c=1}^{t_{k}}P_{i}(c)$, Rearranging the terms, we have
  \begin{align*}
      R(T) \le R(t_k) \le \sqrt{4\alpha kn \ln T \rbr{t_{k} - t_{k}'}} + \sum_{l=1}^{k-1}\delta_{lk}\rbr{t_{l} - t'_{l}} + nP(a_{k}|s^{*})
  \end{align*}
\end{proof}

\section{Proof for \Cref{sec:regret_lower_bound}}\label{proof:regret_lower_bound}

\subsection{Regret Lower bound for Feedback Model $\Mcal1$}
We prove the lower bound for the feedback model $\Mcal1$ with mutually exclusive rewards. By constructing a family of environments $\Ecal_i, i\in[n]$. We define the arm set as $\Acal = \cbr{a_1, \cdots, a_{n+k-1}}$.

In environment $\Ecal_i$, the optimal set is $\cbr{a_i, a_{n+1},a_{n+2}\cdots,a_{n+k-1}}$. We assume those arms to have $\frac{1}{k+1}$ probability of receiving positive reward in any set. All other arms not belonging to the optimal set have $\frac{1}{k+1}-{\epsilon}$ probability of receiving positive reward in any set. It's easy to verify that all environments $\Ecal_i$ satisfies \Cref{assumption:weak} and the minimum gap between optimal and sub-optimal set is $\epsilon$. We then have the following regret lower bound.

Denote $q_i$ to be the distribution of $T$-step history induced by $\Ecal_i$. We then have the following Lemma:
\begin{lemma}[Lower Bound for Each Arm] Under feedback model $\Mcal1$, let $\phi$ be an algorithm for the combinatorial bandits problem with \Cref{assumption:weak}, such that the regret is $R_\phi(T)=o(T^a)$ for all $a>0$. Then for the environment $\Ecal_1$ we have $\EE_{q_1}(N_j(T)) = \Omega\rbr{\frac{\log T}{k\epsilon^2}}$ for all arm $a_j$.
\end{lemma}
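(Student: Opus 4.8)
The plan is to run a change-of-measure lower bound in the spirit of Lai--Robbins, adapted to the $\Mcal1$ feedback. Under $\Ecal_1$ the optimal arms $a_1$ and the shared arms $a_{n+1},\dots,a_{n+k-1}$ are selected $\Theta(T)$ times by any consistent $\phi$, so for those the claimed bound is immediate; the real content is a sub-optimal arm $a_j$ with $j\in\cbr{2,\dots,n}$. For each such $j$ I will build an alternative environment $\Ecal_1^{(j)}$ that differs from $\Ecal_1$ \emph{only} in the reward probability of $a_j$, raising $P(a_j\mid\cdot)$ from $\tfrac{1}{k+1}-\epsilon$ to $\tfrac{1}{k+1}+\epsilon$; this makes $\cbr{a_j,a_{n+1},\dots,a_{n+k-1}}$ the unique optimal set with gap exactly $\epsilon$. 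The instance stays admissible: every per-set sum of reward probabilities is at most $\tfrac{k}{k+1}+\epsilon<1$ (so $\Mcal1$ remains consistent), and since all probabilities are set-independent we have $P(a\mid s)=P(a\mid s^{*})$ for every common arm, so \Cref{assumption:weak} holds. Perturbing a single arm, rather than comparing $\Ecal_1$ against $\Ecal_j$ directly, is essential: $\Ecal_j$ also changes $a_1$, and because $\EE_{q_1}\sbr{N_1(T)}=\Theta(T)$ the resulting divergence bound would be vacuous.

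Next I will apply the divergence decomposition. Writing $q_1^{(j)}$ for the law of the $T$-step history under $\Ecal_1^{(j)}$, the chain rule for KL gives $D_{KL}\rbr{q_1 \,\|\, q_1^{(j)}}=\sum_{s\in\Scal}\EE_{q_1}\sbr{N_s(T)}\,d_s$, where $d_s$ is the KL between the categorical reward laws of set $s$ under the two environments and $N_s(T)$ counts selections of $s$. Because only $a_j$ moved, $d_s=0$ unless $a_j\in s$, and $\sum_{s\ni a_j}\EE_{q_1}\sbr{N_s(T)}=\EE_{q_1}\sbr{N_j(T)}$. For a set containing $a_j$, only the $a_j$-coordinate and the no-reward coordinate of the categorical law shift, each by $2\epsilon$; under $\Mcal1$ the no-reward mass is at least $1-\tfrac{k}{k+1}=\tfrac{1}{k+1}$ and the $a_j$-mass is $\Theta(\tfrac{1}{k+1})$, so the second-order expansion $D_{KL}(p,q)=\tfrac{\epsilon^2}{p_1}+\tfrac{\epsilon^2}{p_2}+o(\epsilon^2)$ quoted in \Cref{thm:regret_lower_bound} yields $d_s=O(k\epsilon^{2})$ uniformly. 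Hence $D_{KL}\rbr{q_1 \,\|\, q_1^{(j)}}\le C\,k\epsilon^{2}\,\EE_{q_1}\sbr{N_j(T)}$ for an absolute constant $C$; this is exactly where the $\Mcal1$-specific factor $k$ enters the bound.

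Finally I will lower-bound the same divergence using consistency. The standard transportation (change-of-measure) inequality applied to $Z=N_j(T)/T\in[0,1]$ gives $D_{KL}\rbr{q_1 \,\|\, q_1^{(j)}}\ge d\rbr{\EE_{q_1}\sbr{N_j(T)}/T,\ \EE_{q_1^{(j)}}\sbr{N_j(T)}/T}$ with $d(\cdot,\cdot)$ the binary relative entropy. Since $a_j$ is sub-optimal in $\Ecal_1$ and optimal in $\Ecal_1^{(j)}$, each pull on the ``wrong'' side costs at least $\epsilon$, so $\epsilon\,\EE_{q_1}\sbr{N_j(T)}\le R_\phi(T)$ and $\epsilon\rbr{T-\EE_{q_1^{(j)}}\sbr{N_j(T)}}\le R_\phi(T)$. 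The hypothesis $R_\phi(T)=o(T^{a})$ for every $a>0$ forces $\log R_\phi(T)=o(\log T)$, hence $\EE_{q_1}\sbr{N_j(T)}/T\to 0$ and $\EE_{q_1^{(j)}}\sbr{N_j(T)}/T\to 1$ with $\log\tfrac{\epsilon T}{R_\phi(T)}=(1-o(1))\log T$; by the monotonicity of the binary KL these substitutions give $D_{KL}\rbr{q_1 \,\|\, q_1^{(j)}}\ge (1-o(1))\log T$. Combining with the upper bound of the previous paragraph yields $\EE_{q_1}\sbr{N_j(T)}\ge \tfrac{(1-o(1))\log T}{C\,k\epsilon^{2}}=\Omega\rbr{\tfrac{\log T}{k\epsilon^{2}}}$, as claimed.

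I expect the main obstacle to be the construction and bookkeeping of the second step: isolating an alternative that perturbs only $a_j$ while keeping the instance both $\Mcal1$-feasible and consistent with \Cref{assumption:weak}, and then tracking the constants in $d_s$ carefully enough to surface the $k$ factor (which ultimately separates the $\Mcal1$ and $\Mcal2$ lower bounds). The third step is the now-routine consistency argument.
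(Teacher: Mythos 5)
Your proof is correct and follows the same change-of-measure blueprint as the paper (per-selection KL of order $k\epsilon^{2}$ under $\Mcal1$, divided into an $\Omega(\log T)$ total divergence forced by consistency), but it differs in two substantive ways, one of which actually repairs the paper's argument. First, the alternative environment: the paper compares $\Ecal_1$ against $\Ecal_j$ from its family, and asserts that "the only different arm between $\Ecal_1$ and $\Ecal_j$ is arm $a_j$" --- but by the family's own definition $a_1$ also changes (from $\tfrac{1}{k+1}$ to $\tfrac{1}{k+1}-\epsilon$), and since $\EE_{q_1}\sbr{N_1(T)}=\Theta(T)$ the stated KL upper bound $D_{\text{KL}}(q_1;q_j)\le C\,\EE\sbr{N_j(T)}k\epsilon^{2}$ omits a $\Theta(Tk\epsilon^{2})$ contribution from sets containing $a_1$. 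Your single-arm perturbation to $\tfrac{1}{k+1}+\epsilon$ is the standard fix, and your feasibility and \Cref{assumption:weak} checks for it go through. Second, the mechanism for the $\Omega(\log T)$ divergence lower bound: the paper defines the event $B_j=\cbr{N_j(T)\le \log T/\epsilon^{2}}$, splits on whether $q_1(B_j)\ge 1/3$, and invokes the Karp--Kleinberg event-based inequality together with Markov under $q_j$; you instead apply the transportation inequality to $Z=N_j(T)/T$ and use consistency in both environments to drive the binary KL to $(1-o(1))\log T$. Both routes are standard and yield the same constant-free conclusion; yours avoids the case split, while the paper's isolates the threshold $\log T/\epsilon^{2}$ explicitly. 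The quantitative crux --- that under $\Mcal1$ the categorical masses are $O(1/k)$ so each selection of a set containing $a_j$ contributes only $O(k\epsilon^{2})$ to the divergence --- is identical in both, and your bookkeeping of which two coordinates shift (the $a_j$ mass and the no-reward mass) is the careful version of the paper's generic $\sum_i \epsilon_i^{2}/p'_i$ bound.
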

\begin{proof}
	For a fixed $j\notin\cbr{1, n}$, we define the event $B_j = \cbr{N_j(T)\le \log T/\epsilon ^2}$. If $q_1(B_j) < 1/3$, we have
	\begin{align*}
		\EE_{q_1}(N_j(T))\ge q_1(B_j^c)\log T/\epsilon ^2=\Omega(\log T/\epsilon ^2)
	\end{align*}
	Now suppose $q_1(B_j)\ge 1/3$. Note that in environment $\Ecal_j$, the algorithm will incur at least $\epsilon $ regret if not selecting $a_j$, Therefore we have $\EE_{q_j}(T-N_j(T)) = o(T^a)$. By Markov's inequality, we have
	\begin{align*}
		q_j(B_j) = q_j\rbr{\cbr{T-N_j(T)>T-\log T / \epsilon ^2}} \le \frac{\EE_{q_j}(T-N_j(T))}{T-\log T / \epsilon ^2} = o(T^{a-1})
	\end{align*}
	From \citep{karp2007noisy}, we know that for any event $B$ and two distributions $p,q$ with $p(B)>1/3$ and $q(B)<1/3$, we have
	\begin{align*}
		D_{\text{KL}}(p;q)\ge \frac{1}{3}\ln (\frac{1}{3q(B)}) - \frac{1}{e}
	\end{align*}
	Putting $q_1, q_j$ and $B_j$ into the inequality above, we have
	\begin{align*}
		D_{\text{KL}}(q_1;q_j) \ge \frac{1}{3}\ln (\frac{1}{3o(T^{a-1})})-\frac{1}{e}=\Omega(\ln T)
	\end{align*}
	On the other hand, since the only different arm between $\Ecal_1$ and $\Ecal_j$ is arm $a_j$. We need to bound the KL-divergence by playing any  set containing $a_j$. Suppose $p$ is a categorical distribution with parameters $p_1, ..., p_k$ for $k$ items and $p'$ is another categorical distribution with parameters $p_1 - \epsilon_1, ..., p_k - \epsilon_k$. Then we have
	\begin{align*}
	    D_{\text{KL}}(p, p') = \sum_{i = 1}^k(p'_i + \epsilon_i)\log \frac{p'_i + \epsilon_i}{p'_i} \le \sum_{i=1}^k(p'_i + \epsilon_i)\frac{\epsilon_i}{p'_i} = \sum_{i=1}^k\frac{\epsilon_i^2}{p'_i},
	\end{align*}
	where the last inequality holds because $\sum_{i=1}^k\epsilon_i = 0$. Therefore we can directly bound the KL-divergence of $q_1$ and $q_j$ by
	\begin{align*}
	    D_\text{KL}(q_1;q_j) \le C\EE(N_j(T))k\epsilon ^2,
	\end{align*}
	where $C$ is a problem-independent constant. It then directly implies that
	\begin{align*}
		C\EE(N_j(T))k\epsilon ^2 = \Omega(\log T) \implies \EE_{q_1}(N_j(T)) = \Omega\rbr{\frac{\log T}{k\epsilon^2}}
	\end{align*}
	which completes the proof.
\end{proof}

From \Cref{lemma:regret_lower_bound}, we know that in $\Ecal_1$ each arm will be played for $\Omega(\log T/k\epsilon^2)$, and each time a sub-optimal arm is played, it induces at least $\epsilon$ regret. Since we have $n+k-1$ arms in $\Acal$, it immediately implies that the regret is lower bounded by $\Omega(n\log T/k\epsilon)$. For the algorithm that doesn't satisfy the assumption in \Cref{lemma:regret_lower_bound} (i.e. for some $a>0$, the $o(T^a)$ regret bound doesn't hold), the lower bound holds directly. As a summary, we have \Cref{thm:regret_lower_bound}.

\subsection{Regret Lower bound for Feedback Model $\Mcal2$}
The environment construction is similar to the one for $\Mcal1$. The only difference is to replace all $\frac{1}{k+1}$ with $\frac{1}{2}$. Accordingly, we have

\begin{lemma}[Lower Bound for Each Arm]\label[lemma]{lemma:regret_lower_bound} Under feedback model $\Mcal2$, let $\phi$ be an algorithm for the combinatorial bandits problem with \Cref{assumption:weak}, such that the regret is $R_\phi(T)=o(T^a)$ for all $a>0$. Then for the environment $\Ecal_1$ we have $\EE_{q_1}(N_j(T)) = \Omega\rbr{\frac{\log T}{\epsilon^2}}$ for all arm $a_j$.
\end{lemma}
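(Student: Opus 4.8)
The plan is to mirror the $\Mcal1$ argument almost verbatim, changing only the step that upper bounds the Kullback–Leibler divergence between the two history distributions; it is precisely there that the factor of $k$ disappears. First I would fix an arm $a_j$ and define the same event $B_j = \cbr{N_j(T) \le \log T / \epsilon^2}$. If $q_1(B_j) < 1/3$ then directly $\EE_{q_1}(N_j(T)) \ge q_1(B_j^c)\log T/\epsilon^2 = \Omega(\log T/\epsilon^2)$, which already yields the claim, so the only interesting case is $q_1(B_j) \ge 1/3$.

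In that case, since $a_j$ belongs to the optimal set of $\Ecal_j$ and any step not selecting $a_j$ costs at least $\epsilon$ in regret, the $o(T^a)$ regret assumption forces $\EE_{q_j}(T - N_j(T)) = o(T^a)$; Markov's inequality then gives $q_j(B_j) = o(T^{a-1})$. Feeding $q_1, q_j, B_j$ into the inequality from \citep{karp2007noisy}, namely $D_{\text{KL}}(p;q) \ge \frac{1}{3}\ln\rbr{\frac{1}{3q(B)}} - \frac{1}{e}$, produces the same lower bound $D_{\text{KL}}(q_1; q_j) = \Omega(\log T)$ as in the $\Mcal1$ case.

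The crucial step is the matching upper bound on $D_{\text{KL}}(q_1; q_j)$. Under $\Mcal2$ the rewards of the arms are generated independently, so the likelihood of a $T$-step history factorizes over time steps and over arms; since $\Ecal_1$ and $\Ecal_j$ differ only in the reward distribution of $a_j$, the divergence decomposition leaves a single surviving term, $D_{\text{KL}}(q_1;q_j) = \EE_{q_1}(N_j(T))\cdot D_{\text{KL}}\rbr{\text{Ber}\rbr{\tfrac{1}{2} - \epsilon}\,\|\,\text{Ber}\rbr{\tfrac{1}{2}}}$. Because both Bernoulli parameters sit near $\tfrac{1}{2}$ (bounded away from $0$ and $1$), the per-play divergence is $\Theta(\epsilon^2)$, rather than the $\Theta(k\epsilon^2)$ that arose in $\Mcal1$ from the small probabilities $\tfrac{1}{k+1}$ entering the denominator of $\sum_i \epsilon_i^2/p_i'$. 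Hence $C\,\EE_{q_1}(N_j(T))\,\epsilon^2 = \Omega(\log T)$ for a problem-independent constant $C$, which rearranges to $\EE_{q_1}(N_j(T)) = \Omega(\log T/\epsilon^2)$.

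The main obstacle—really the only substantive point—is justifying the divergence decomposition cleanly: one must argue that because feedback in $\Mcal2$ is a vector of independent Bernoulli rewards, only the coordinate corresponding to $a_j$ contributes, and that this single-coordinate Bernoulli KL is $O(\epsilon^2)$ uniformly in $k$. Everything else (the two-case split, the Markov bound, and the step from \citep{karp2007noisy}) transfers unchanged from the $\Mcal1$ proof, and the resulting improvement by a factor of $k$ in the final bound is exactly the $k$-fold increase in per-play information that $\Mcal2$ affords, as anticipated in the discussion following \Cref{thm:regret_lower_bound}.
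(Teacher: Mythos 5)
Your proposal is correct and follows essentially the same route as the paper, which simply ports the $\Mcal1$ argument to $\Mcal2$ after replacing $\frac{1}{k+1}$ with $\frac{1}{2}$; your identification of the independent-Bernoulli divergence decomposition with per-play KL of order $\Theta(\epsilon^2)$ (rather than $\Theta(k\epsilon^2)$) as the only step that changes is exactly the intended argument, and you actually spell it out in more detail than the paper does.
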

Similar to previous subsection, it implies a $\Omega(n\log T/\epsilon)$ lower bound.

\section{Experiment Setup}\label{APDX:exp_setup}
\subsection{Multinomial Logit}
In this environment, the reward is generated according to a multinomial logit model
\begin{align*}
    \PP(a_i | s(t)) = \frac{v_i}{1 + \sum_{a_i\in s(t)}v_i},\quad \PP(a_0 | s(t)) = \frac{1}{1 + \sum_{a_i\in s(t)}v_i}
\end{align*}
where $v_i$ is the value associated with each arm $a_i$, determining the reward probability. In this experiment, we set $v_i = 1 - 0.04i$ with $i\in[20]$. The size of set is set to $k=10$, and the optimal set is $s^*$ is composed by arms from $a_1$ to $a_{10}$. The regret of set $s(t)$ is given by
\begin{align*}
    reg(s(t)) = \frac{1}{1 + \sum_{a_i\in s(t)}v_i} - \frac{1}{1 + \sum_{a_i \in s^*}v_i}
\end{align*}

\subsection{Random Utility Model}
In this environment, for an set $s(t)$ at time step $t$, each arm $a_i\in s(t)$ will independently draw a Gaussian distributed random variable $x_i \sim \Ncal(\mu_i, 1)$, where $\mu_i$ is the mean associated with each arm $a_i$. Along with that $a_0$ will draw a $x_0 \sim \Ncal(2, 1)$. The arm $a_i$ (including $a_0$) with highest $x_i$ will receive reward. Thus we have the probability of $a_i$ getting reward as
\begin{align*}
    \PP(a_i|s(t)) = \PP(x_i = \max_{a_j\in s(t) \cup \cbr{a_0}}x_j)
\end{align*}
Here, we set $\mu_i = 1 - 0.04i$ with $i\in[20]$. The size of set is set to $k=5$, and the optimal set $s^*$ is composed by the arms from $a_1$ to $a_5$. For the convenience of computation, the regret of set $s(t)$ is defined slightly different as
\begin{align*}
    reg(s(t)) = \sum_{a_i \in s^*}\mu_i - \sum_{a_i \in s(t)}\mu_i
\end{align*}
Once $s(t)$ recovers the optimal set $s^*$, which maximizes the probability of $s(t)$ receiving reward, we will have this regret $reg(s(t)) = 0$.

\subsection{Preference Matrix}
In this environment, the probability of one arm $a_i$ getting reward is fully specified by a preference matrix. For ease of representation, we set the number of arms to $n=10$ and the size of set to $k=2$. Th total number of sets is 45, much lesser than the previous two environments. However, with a specially designed preference matrix (including the loop in preference, etc), the environment turns out to be the hardest.

We set $M$ to be the preference matrix with $M_{i,j} = \PP(a_i|s(t) = \cbr{a_i, a_j}) - \PP(a_j|s(t) = \cbr{a_i, a_j})$. We set the optimal set to be $s^* = \cbr{a_1, a_2}$ with $\PP(a_0|s^*) = 0.08$. For all other sets $s$ which are sub-optimal, we set $\PP(a_0|s) = 0.1$. The preference matrix $M$ is given in \Cref{table:preference_matrix}.

\begin{table}[h]
\centering
\begin{tabular}{|c|c|c|c|c|c|c|c|c|c|c|}
\hline
      & $a_1$  & $a_2$  & $a_3$  & $a_4$ & $a_5$ & $a_6$ & $a_7$ & $a_8$ & $a_9$ & $a_{10}$ \\ \hline
$a_1$  & --    & 0.02  & 0.05  & 0.1  & 0.1  & 0.2  & 0.25 & 0.3  & 0.3  & 0.3   \\ \hline
$a_2$  & -0.02 & --    & 0.05  & 0.1  & 0.1  & 0.2  & 0.25 & 0.3  & 0.3  & 0.3   \\ \hline
$a_3$  & -0.05 & -0.05 & --    & 0.45 & 0.45 & 0.45 & 0.45 & 0.45 & 0.45 & 0.45  \\ \hline
$a_4$  & -0.1  & -0.1  & -0.45 & --   & -0.3 & 0.3  & 0    & 0    & 0    & 0     \\ \hline
$a_5$  & -0.1  & -0.1  & -0.45 & 0.3  & --   & -0.3 & 0    & 0    & 0    & 0     \\ \hline
$a_6$  & -0.2  & -0.2  & -0.45 & -0.3 & 0.3  & --   & 0    & 0    & 0    & 0     \\ \hline
$a_7$  & -0.25 & -0.25 & -0.45 & 0    & 0    & 0    & --   & 0    & 0    & 0     \\ \hline
$a_8$  & -0.3  & -0.3  & -0.45 & 0    & 0    & 0    & 0    & --   & 0    & 0     \\ \hline
$a_9$  & -0.3  & -0.3  & -0.45 & 0    & 0    & 0    & 0    & 0    & --   & 0     \\ \hline
$a_{10}$ & -0.3  & -0.3  & -0.45 & 0    & 0    & 0    & 0    & 0    & 0    & --    \\ \hline
\end{tabular}
\caption{Preference Matrix $M$}\label{table:preference_matrix}
\end{table}

We can see that when $a_3$ pairs with any other sub-optimal arm, it will have a higher chance of getting reward than $a_1$ and $a_2$. It makes $a_3$ the seemingly best single arm. Also note that when $a_4$ pairs with $a_5$, $a_5$ will have a higher chance of getting reward. Similarly, $a_6$ will win over $a_5$ and $a_4$ will win over $a_6$. The preference therefore forms a loop among $a_4, a_5, a_6$.

The regret of $s(t)$ is given by
\begin{align*}
    reg(s(t)) = \PP(a_0|s(t)) - \PP(a_0|s^*)
\end{align*}

\subsection{Random Weak Optimal Set Consistency}
In this environment, we randomly generate the environment with Algorithm \ref{alg:env_generation} that satisfies the Assumption \ref{assumption:weak}. 
\begin{algorithm}[h]
\centering 
\begin{algorithmic}[1]
\STATE \textbf{Input:} Number of Arms $n$, set Size $k$.
\STATE Set set $s^* = \{1, 2, \cdots, k\}$ be the optimal set. Randomly Sample $\mathbb{P}(a|s^*)\sim \text{Uniform}(0, \frac{1}{k})$.
\FOR{set $s\neq s^*$}
\WHILE{$\sum_{a\in s} P(a|s) > \sum_{a^*\in s^*} \mathbb{P}(a^*|s^*)$}
\FOR{$a\in s$}
\IF{$a\in s^*$}
\STATE Sample $\mathbb{P}(a|s) \sim\text{Uniform}(\mathbb{P}(a|s^*), \frac{1}{k})$.
\ELSE
\STATE Sample $\mathbb{P}(a|s) \sim\text{Uniform}(0, \frac{1}{k})$.
\ENDIF
\ENDFOR
\ENDWHILE
\ENDFOR 
\end{algorithmic}
\caption{\textsc{Generating Environment Satisfies Assumption \ref{assumption:weak}.} }
\label{alg:env_generation}
\end{algorithm}

By construction, the environment satisfies Assumption \ref{assumption:weak}. Moreover, as we randomly sample the feedback for each set randomly, it's not necessary for the generated environment to satisfy more stronger Assumption, e.g. the strict preference order. The regret of set $s(t)$ is given by
\begin{align*}
    reg(s(t)) = \sum_{a\in s(t)} \mathbb{P}(a|s_t) - \sum_{a^*\in s^*}\mathbb{P}(a^*|s^*).
\end{align*}

\end{document}